\theoremstyle{plain}
\newtheorem{theorem}{Theorem}[section]
\newtheorem{proposition}[theorem]{Proposition}
\newtheorem{lemma}[theorem]{Lemma}
\theoremstyle{definition}
\newtheorem{definition}[theorem]{Definition}
\theoremstyle{remark}
\icmltitlerunning{SynEVO: A neuro-inspired spatiotemporal evolutional framework for cross-domain adaptation}
\begin{document}

\twocolumn[
\icmltitle{SynEVO: A neuro-inspired spatiotemporal evolutional framework for cross-domain adaptation}



\icmlsetsymbol{equal}{*}

\begin{icmlauthorlist}
\icmlauthor{Jiayue Liu}{yyy}
\icmlauthor{Zhongchao Yi}{yyy}
\icmlauthor{Zhengyang Zhou}{yyy,comp,sch}
\icmlauthor{Qihe Huang}{yyy}
\icmlauthor{Kuo Yang}{yyy}
\icmlauthor{Xu Wang}{yyy,comp}
\icmlauthor{Yang Wang}{yyy,comp}
\end{icmlauthorlist}

\icmlaffiliation{yyy}{University of Science and Technology of China (USTC), Hefei, China}
\icmlaffiliation{comp}{Suzhou Institute for Advanced Research, USTC, Suzhou, China}
\icmlaffiliation{sch}{State Key Laboratory of Resources and Environmental Information System, Beijing, China}

\icmlcorrespondingauthor{Zhengyang Zhou}{zzy0929@ustc.edu.cn}
\icmlcorrespondingauthor{Yang Wang}{angyan@ustc.edu.cn}

\icmlkeywords{NeuroAI, spatiotemporal learning, cross-domain adaptation}

\vskip 0.3in
]



\printAffiliationsAndNotice{}  

\begin{abstract}
Discovering regularities from spatiotemporal systems can benefit various scientific and social planning. Current spatiotemporal learners usually train an independent model from a specific source data that leads to limited transferability among sources, where even correlated tasks requires new design and training. The key towards increasing cross-domain knowledge is to enable collective intelligence and model evolution. In this paper, inspired by neuroscience theories, we theoretically derive the increased information boundary via learning cross-domain collective intelligence and propose a Synaptic EVOlutional spatiotemporal network, SynEVO, where SynEVO breaks the model independence and enables cross-domain knowledge to be shared and aggregated. Specifically, we first re-order the sample groups to imitate the human curriculum learning, and devise two complementary learners, elastic common container and task-independent extractor to allow model growth and task-wise commonality and personality disentanglement. Then an adaptive dynamic coupler with a new difference metric determines whether the new sample group should be incorporated into common container to achieve model evolution under various domains. Experiments show that SynEVO improves the generalization capacity by at most 42\% under cross-domain scenarios and SynEVO provides a paradigm of NeuroAI for knowledge transfer and adaptation.

\end{abstract}

\section{Introduction}
\label{Introduction}

\par Spatiotemporal learning aims to predict future urban evolution, which facilitates urban management and socioeconomic planning. Recently, diverse spatiotemporal forecasting solutions have  well resolved  data sparsity~\cite{zhou2020riskoracle}, temporal shifts~\cite{zhou2023maintaining}, as well as unseen area inferences~\cite{feng2024spatio}. As usual practices, almost all spatiotemporal learners train independent models from specific sources where both models and data are isolated. With the growing of available urban sensors and the diversity of data sources under the rapid urban expansion, the task-specific data-driven learning is inevitably faced with the increased costs of repetitive model designs and computational resources. To this end, an evolvable data-adaptive learner that accommodates cross-domain transferability and adaptivity is highly required to facilitate sustainable urban computing.
There have been a number of efforts that try to improve generalization  of spatiotemporal learning, where it can be classified as two folds, i.e., countering shifts on the same source domain and across different source domains. On the same source, the initial step is the continuous spatiotemporal learning implemented with experience reply~\cite{chen2021trafficstream}, and  spatiotemporal out-of-distribution issue (OOD) is raised by capturing causal invariance for confronting temporal shifts~\cite{zhou2023maintaining}. After that, a series of models take environments as indicators to guide generalization~\cite{xia2024deciphering,wang2024stone, yuan2023environment}. However, even for different kinds of sources in a same system, these independent models fail to share common information for task transfer across domains. For generalization across sources, a prompt-empowered universal model for adapting  various data sources~\cite{yuan2024unist}, and  a task-level continuous spatiotemporal learner, which actively capture the stable commonality and fine-tune with individual personality for new tasks~\cite{yi2024get} are proposed. Even so, these models still suffer three critical issues for cross-domain transfer and data adaptive model evolution. 
1) \textbf{No theoretical guarantee} for collective intelligence facilitating cross-domain transfer. 
2) Not all tasks share common patterns, thus uniformly involving all tasks inevitably \textbf{introduces noise}. 
3) These models are \textbf{not elastic} to actively evolve when data distribution changes.

Fortunately, with the progress of Neuro-Artificial Intelligence (NeuroAI), neural networks are designed to imitate knowledge acquiring for model generalization and cooperation~\cite{wang2022coscl,wang2024comprehensive}, such as neuro-inspired continuous learning~\cite{wang2023incorporating}and complementary-based~\cite{kumaran2016learning} neural architecture. Considering the similarity between the cross-domain transfer and the way human acquire new skills from prior knowledge, NeuroAI holds great potential to overcome effective cross-domain knowledge transfer. However, given various learning behaviors in human brain, how to couple neuroscience with spatiotemporal network tailored for effective transfer and evolution is still challenging as following aspects, 1) As the collective intelligence cannot be accomplished in an action, how to progressively learn tasks from different domains from easy to difficult, 2) How to imitate human learning process to disentangle commonality and  ensure common container elastic with continually receiving information. 3) How to ensure the common-individual models to quickly accessible for few-shot generalization. 

Actually, classical neuroscience theories and new advancement reveal that, 1) synapse is an important structure connecting neurons, which takes the role of message sharing and cooperation\cite{van2020brain,kennedy2016synaptic,benfenati2007synaptic}, 2) neuron and synapse are activated by neurotransmitters and action potentials, which is analogous to gradients in artificial neural networks (ANNs), where larger gradient intensity implies larger inconsistency between solidified knowledge and new information~\cite{gulledge2005synaptic,hussain2017pruning,zenke2017continual}. Inspired by observations, we theoretically analyze data-driven knowledge space within neural network can be increased with commonality via information entropy. To this end, we propose a \textbf{Syn}aptic \textbf{EVO}lutional spatiotemporal network, SynEVO, to  enable easy cross-domain transfer. Our SynEVO couples three neuro-inspired structures. First, to learn the tasks progressively, we devise a curriculum-guided sample group re-ordering to monotone increasing learning difficulty via task-level gradient relations. Second, to disentangle cross-domain commonality, we borrow the cerebral neocortex and hippocampus structures in brain, and design complementary dual learners, an Elastic Common Container with growing capacity  as the core synaptic function to receive cross-domain information for elastically collecting new regularities, and a Task independent Personality Extractor to characterize individual task features for adaptation. Finally, an adaptive dynamic coupler with a difference metric is devised to identify whether the new sample group should be incorporated into common container for enabling model evolution and increasing collective intelligence, which reduces the pollution of inappropriate data. The contributions are three-fold. 
\begin{itemize}
    \item Inspired by neuroscience, we theoretically analyze the increased generalization capacity with cross-domain intelligence and the facilitated task convergence from progressive learning, and a novel NeuroAI framework is proposed to tackle the cross-domain challenge and empower model evolution.
    \item By human-machine analogy, we introduce an ST-synapse, and couple curriculum and complementary learning with synapse to realize the progressive learning and commonality disentanglement. The model evolution is achieved by elastic common container growing and adaptive sample incorporation.
    \item Extensive experiments show the collective intelligence increases the model generalization capacity under both source and temporal shifts by at 0.5\% to 42\%, including few-shot and zero-shot transfer, and empirically validate the convergency of progressive curriculum learning.     
    The extremely reduced memory cost, i.e., only 21.75\% memory cost against SOTA on  iterative model training and evolution advances urban computing towards sustainable computing paradigm.
\end{itemize}

\section{Related Works}
\textbf{Spatiotemporal learning and its generalization capacity.}  
Spatiotemporal learning has been investigated to enable convenient urban life~\cite{zhang2017deep,zhang2020taxi,ye2019co,zhou2020riskoracle,wu2019graph,DBLP:journals/corr/abs-2401-10134,liu2025timekd,timecma2025liu,miao2024less,miao2025parameter}. However, with urban expansion and economic development, it increases the concerns of distribution shifts as all previous learning models assume independent identical distribution. CauSTG devises causal spatiotemporal learning to explicitly address the temporal domain shifts~\cite{zhou2023maintaining}. Concurrently, continuous spatiotemporal learning becomes a prevalent topic for model update to counter temporal shifts~\cite{wang2023pattern,chen2021trafficstream}. Although prosperity, researchers find that data from different sources in a same system tend to share some common patterns. Some pioneering literature exploits unified multi-task spatiotemporal learning to accommodate diverse modalities where a prompt-empowered universal model is proposed~\cite{yuan2024unist}, while Yi, et,al. investigates the task-level continuous learning to iteratively capture the common and individual patterns~\cite{yi2024get}. Even though, these models still ignore three important issues. 1) Fail to unify the source domain and temporal domain for transfer  in a same learning architecture where commonality and individuality are well-decoupled. 2) In a same system, how to quantify relations among tasks and filtering the related ones to facilitate commonality is not clear. 3) Previous models fail to evolve with distribution changes to realize efficient cross-domain few-shot generalization. In contrast, we overcome the cross-domain adaptation through lens of NeuroAI by imitating  human acquiring new knowledge and skills, to achieve evolutional spatiotemporal network.

\textbf{Neuro-inspired Artificial Intelligence (NeuroAI).} NeuroAI is an emerging research topic which designs ANNs from the inspiration of neuroscience  and the way how human acquire new knowledge~\cite{luk2024cognitive}. ANN is originated from the biological neurons and researchers  teach machine learners to learn like humans~\cite{lindsay2020attention,zhang2018artificial}. The literature can be classified as two folds from macroscopic  to micro-structures. In a macro perspective, human brain usually progressively acquires the knowledge from easy to difficult, which inspires curriculum learning in machine intelligence~\cite{wang2021survey,blessing2024information}, and complementary learning scheme with hippocampus and neocortex structures~\cite{kumaran2016learning, o2014complementary,arani2022learning}. To provide  feedback on AI model, reinforcement learning is proposed, and reinforcement learning from human feedback (RLHF) is incorporated into LLMs for thinking like human~\cite{lee2023rlaif}. On the micro aspect, brain neuron is activated by neurotransmitters and action potentials with a threshold for activation, where the message passing  occurs when there is large potential difference~\cite{zhang2018artificial}. Analogously, gradients in ANN are similar to potential difference in brain neurons, and gradient can be viewed as the knowledge gap between new data and trained models, thus  gradients can be exploited to interpret the  relation between model and sample groups. Synapse is also an essential structure for bridging the message between neurons, where pioneering researches have demonstrated the potential of knowledge transfer by imitating  synapse structure~\cite{zenke2017continual,hussain2017pruning}. Actually, unveiling the relations between brain structures and AI model transfer mechanism can advance the model evolution.  Despite prosperity, on cross-domain transfer in spatiotemporal learning, how to investigate the specific mechanism that adapting to brain learning on transfer and generalization  is still under-explored.   

\section{Preliminaries}
\textbf{Spatiotemporal Cross-Domain Observations.}
In an urban system, data can be collected from different sources. We can model diverse spatiotemporal learning tasks  as spatial-temporal graph prediction, and the deterministic observations can be defined as ${(\bm{x}_i^j)}_c$, which is an element in $\mathbb{X} = \{{\bm{X}_1, \bm{X}_2, ..., \bm{X}_C}\}\in\mathbb{R}^{N\times T\times C}$. The ${(\bm{x}_i^j)}_c$ indicates the value on graph node $j$ at timestamp $i$ from $c$-th data source, where $C$ represents the number of  sources. As the data distribution can be shifted across temporal steps,  then the task domain can be classified into both different temporal domains with changed distribution  and source domains. 

\textbf{Neuro-Inspired Cross-Domain Learning.}
We define neuro-inspired cross-domain spatiotemporal learning model as an evolution model $\mathcal{M}$, i.e.,
\begin{equation}
    \widehat{\bm{Y}} = \mathcal{M}(\bm{X}_1,\bm{X}_2,\ldots,\bm{X}_{k};{\bm{\theta}}_{\mathcal{M}})
\end{equation}
 where ${\bm{\theta}}_{\mathcal{M}}$ denotes the learnable parameters of model $\mathcal{M}$.
When the data from new domain $\bm{X}_{k+1}$ comes, we aim to quickly adapt $\mathcal{M}$ to $\mathcal{M}'$, i.e.,
\begin{equation}
   \mathcal{M}'\leftarrow\mathcal{M}(\bm{X}_{k+1},\bm{\theta}_\mathcal{M};\bm{\theta}_{\mathcal{M}'})
\end{equation}
where $\bm{\theta}_{\mathcal{M}'}$ denotes learnable parameters of  updated $\mathcal{M}'$.
\begin{figure}[ht]
    \centering
    \includegraphics[width=\columnwidth]{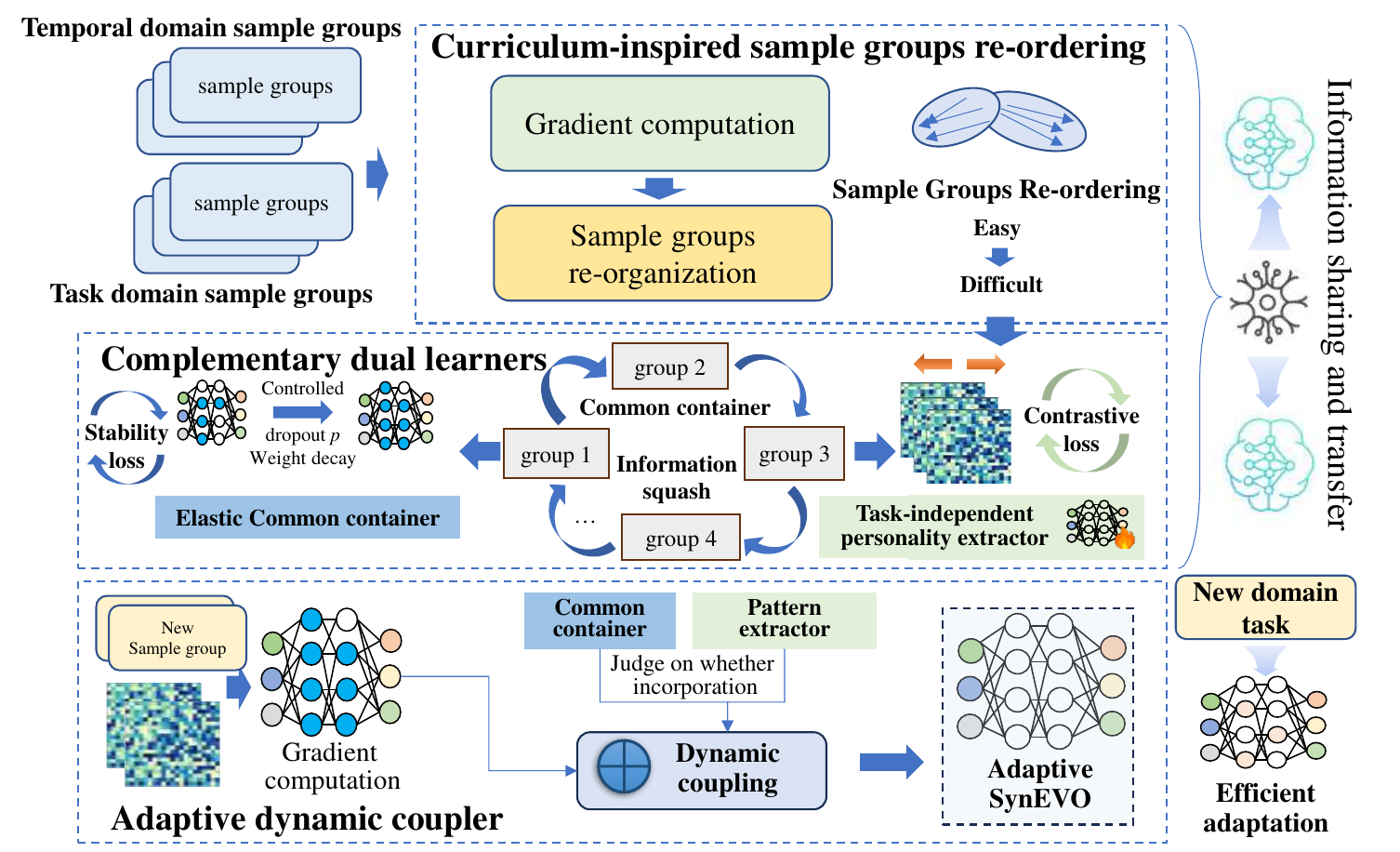}
    \caption{Framework Overview of SynEVO}
    \label{fig1}
\end{figure}

\begin{proposition}{Increased information  with cross-domain learning.}
\label{pro_cross}
Given spatiotemporal data observations from different sources $\{\bm{X}_1, \bm{X}_2, \bm{X}_3, ..., \bm{X}_k\}$ and then there must be shared commonality among domain data patterns, i.e., 
\begin{equation}
\small
 \forall{i,j}(1\le i<j\le k),I(\bm{X}_i;\bm{X}_j)>0
    \end{equation}
then the well-learned information from the cross-domain learning model $\mathcal{M}$ is increased by continually receiving domain knowledge, i.e.,
\begin{equation}
\small
\begin{split}
\mathit{Info}(\mathcal{M}(\bm{X}_1, ...,\bm{X}_k; \bm{\theta}_\mathcal{M}))& > \mathit{Info}(\mathcal{M}(\bm{X}_1, ...,\bm{X}_{k-1}; \bm{\theta}_\mathcal{M}))> \\ ... 
& >\mathit{Info}(\mathcal{M}(\bm{X}_1; \bm{\theta}_\mathcal{M}))
\end{split}
\end{equation}
where $\mathit{Info}$ is the information encapsulated in $\mathcal{M}$. 
\end{proposition}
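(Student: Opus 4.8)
The plan is to make the notion of $\mathit{Info}(\cdot)$ precise as the mutual information between the model's internal representation (or its parameters after training) and the data it has been trained on, and then to show monotonicity by an incremental argument: adding one more domain $\bm{X}_k$ to the training pool cannot decrease this quantity, and in fact strictly increases it because $\bm{X}_k$ carries information not fully contained in $\bm{X}_1,\dots,\bm{X}_{k-1}$. Concretely, I would set $\mathit{Info}(\mathcal{M}(\bm{X}_1,\dots,\bm{X}_k;\bm{\theta}_\mathcal{M})) := I(\bm{Z}_k ; (\bm{X}_1,\dots,\bm{X}_k))$ where $\bm{Z}_k$ is the learned representation obtained after training on the first $k$ domains, or alternatively work directly with the joint entropy $H(\bm{X}_1,\dots,\bm{X}_k)$ as an upper envelope that the model provably captures under an idealized (sufficient-statistic / lossless-encoder) assumption. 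The second route is cleaner for a proposition of this flavor, so I would lead with it and remark that the representation-based version follows by the data-processing inequality.

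First I would write the chain rule for entropy: $H(\bm{X}_1,\dots,\bm{X}_k) = H(\bm{X}_1,\dots,\bm{X}_{k-1}) + H(\bm{X}_k \mid \bm{X}_1,\dots,\bm{X}_{k-1})$. Monotonicity (the non-strict version) is then immediate since conditional entropy is nonnegative. Second, I would upgrade this to the strict inequality claimed in the statement: I need $H(\bm{X}_k \mid \bm{X}_1,\dots,\bm{X}_{k-1}) > 0$, i.e. $\bm{X}_k$ is not a deterministic function of the earlier domains. This is where the hypothesis $I(\bm{X}_i;\bm{X}_j) > 0$ is used together with the complementary fact that the domains are genuinely distinct sources — no domain is degenerate — so each $\bm{X}_k$ retains positive residual entropy (its own "personality," in the paper's language) on top of whatever commonality it shares. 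I would phrase this as a mild non-degeneracy assumption made explicit, then the strict chain of inequalities in the proposition follows by iterating. Third, to connect back to the model $\mathcal{M}$ rather than the raw data, I would invoke that a well-trained learner acts as an (approximately) sufficient statistic for the predictive task, so that $\mathit{Info}(\mathcal{M}(\cdot))$ tracks $H$ of its training inputs up to the irreducible noise floor, preserving the ordering.

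The main obstacle, and the part I expect to need the most care, is that the proposition as literally stated uses an undefined functional $\mathit{Info}$ applied to a model with \emph{fixed} parameters $\bm{\theta}_\mathcal{M}$ across all the terms — so I must either (a) reinterpret $\bm{\theta}_\mathcal{M}$ as the converged optimum for each respective training set, making $\mathit{Info}$ a function of the optimal-model–data pair, or (b) treat $\mathit{Info}$ as a property of the achievable representation class and argue that the capacity to encode $k$ domains' worth of mutual structure strictly exceeds that for $k-1$. I would go with (a) and state it as a modeling convention up front, because otherwise the quantities in the displayed chain are not well-defined. A secondary subtlety is ruling out pathological cases where $\bm{X}_k$ happens to be perfectly predictable from the others (which would make the inequality non-strict); I handle this by folding a non-degeneracy clause into the hypothesis alongside the stated $I(\bm{X}_i;\bm{X}_j)>0$ condition. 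Once these definitional points are pinned down, the remaining steps are routine applications of the chain rule and nonnegativity of conditional entropy.
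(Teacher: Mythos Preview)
Your argument is sound under the modeling conventions you set out, but it does not match the paper's route. You formalize $\mathit{Info}$ as (an idealized proxy for) the joint entropy $H(\bm{X}_1,\dots,\bm{X}_k)$ and obtain the strict chain from the entropy chain rule plus $H(\bm{X}_k\mid \bm{X}_1,\dots,\bm{X}_{k-1})>0$. The paper instead works from the Information Bottleneck objective $L=I(\bm{X};\bm{Z})-\beta I(\bm{Z};\hat{\bm{Y}})$ to tie the model's representation $\bm{Z}$ to the data, then introduces an additional \emph{equal-marginal-entropy} assumption $H(\bm{X}_i)=H(\bm{X}_j)$ for all $i,j$, and uses the hypothesis $I(\bm{X}_i;\bm{X}_j)>0$ to argue that the accumulated mutual information $I(\bm{X}_{i+1};\bm{X}_1,\dots,\bm{X}_i)$ is strictly increasing in $i$, hence the conditional entropy $H(\bm{X}_{i+1}\mid \bm{X}_1,\dots,\bm{X}_i)$ is strictly \emph{decreasing}. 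The paper reads that decrease as ``the model is expanding the boundary of learning.'' So where you track the growth of the joint, the paper tracks the shrinkage of the next-domain conditional; your version is more elementary and needs no equal-entropy assumption, while the paper's version makes the link to the trained representation explicit via the bottleneck and emphasizes that each new domain becomes progressively easier given what has been absorbed. Both land on strict monotonicity, but they are measuring different surrogates for $\mathit{Info}$ and rely on different side assumptions (your non-degeneracy of $\bm{X}_k$ given the past versus the paper's $H(\bm{X}_i)=H(\bm{X}_j)$).
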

The Prop.\ref{pro_cross} delivers that cross-domain learning with different data sources `in harmony with diversity' can increase the learned information in model $\mathcal{M}$ and it can be proved in Appendix.~\ref{App:prof}.
\section{Methodology}
\subsection{Framework Overview}
SynEVO constructs a neural synaptic spatiotemporal network to share and transfer cross-domain knowledge for generalization and adaptation. As illustrated in Fig.~\ref{fig1}, our synaptic neural structure consists of three components, curriculum-inspired sample group re-ordering to determine the learning order of sequential samples from easy to difficult, complementary dual common-individual learners including an Elastic Common Container and task-independent personality extractor~\footnote{As observations in source and temporal domains are  organized into sample groups, which can be viewed as various tasks, here we interchangeably utilize sample group and tasks in our main text.} to disentangle the task commonality and personality and an adaptive dynamic coupler to aggregate the dual  learners so as to adapt shared patterns and preserve the individuality of tasks. It is noted that the Spatiotemporal learner backbone is implemented by GraphWaveNet (GWN)~\cite{wu2019graph}.

\subsection{Curriculum Guided Task Reordering}
Learning from easy to difficult is a common practice for human acquiring knowledge and skills, which is named as curriculum learning~\cite{bengio2009curriculum}, e.g., the teaching process in our class also follows inculcating knowledge from basic to improved ones. To capture the commonality of data patterns in different domains, we propose curriculum guided task re-ordering.
From an optimization perspective, directly confronting a complex task can lead the model to be caught in poor local optimums with exploding gradients. In contrast, by starting from a simple task and gradually increasing the difficulty, the model can be effectively guided to converge in the direction of global optimum, which enables better exploration in  parameter space.

Gradients can characterize the consistency degree between training model and new sample groups, thus the gradients are exploited to indicate the difficulty of adapting models to samples. We then exploit the gradient to compute the adaptation between new feeding samples and the training model, and determine the learning order via imitating the curriculum learning process. Specifically, we apply the backward of loss to compute the gradient. For each input sample group $\bm{X}_c$ from $c$-th domain, we initial a trainable model $\mathcal{M}_c(\bm{X}_c;\theta_{\mathcal{M}_c})$ and train $\mathcal{M}_c$ until the loss function of it converges. Then we backward the final loss to compute the gradients of $\theta_\mathcal{M}$ as $\{\bm{\nabla}_1,\bm{\nabla}_2,\ldots,\bm{\nabla}_n\}$, where $\bm{\nabla}_i$ denotes the gradient of the $i$-th layer of $\theta_\mathcal{M}$ and $n$ denotes the the number of the layers of $\theta_\mathcal{M}$. After that, compute the sum of squares of the gradients by,
\begin{equation} sum_c=\sum_{i=1}^{n}\left|\left|\bm{\nabla}_i\right|\right|_2^2
\end{equation}
where $\left|\left|\bm{\nabla}_i\right|\right|_2^2$ denotes the square of the L2 norm of $\bm{\nabla}_i$. Then, we concatenate the gradient to denote the overall consistency between data and model by,
\begin{equation}    \bm{cat}_c=[\bm{\nabla}_1^{\left(expand\right)}||\bm{\nabla}_2^{\left(expand\right)}||\ldots||\bm{\nabla}_n^{\left(expand\right)}]
\end{equation}
where $\bm{\nabla}_i^{(expand)}$ denotes the expanded tensor of $\bm{\nabla}_i$ and $||$ denotes the concatenation of tensors. 

With obtaining all $\bm{cat}_c$ for input data $\bm{X}_c$, we identify the minimum value among them by $min=\underset{c}{\arg\min} sum_c$, which is considered as the compared bench.
Next, to re-order other sample groups, we compute the vector difference between $\bm{cat}_{c}$ and the bench one $\bm{cat}_{min}$ by,
\begin{equation}
    \bm{d}_c = \bm{cat}_c \ominus \bm{cat}_{min}
    \label{eq7}
\end{equation}
where $\ominus$ is the element-wise minus for vectors. After that, we can reorder the input sample groups $\{\bm{X}_1,\bm{X}_2,\ldots,\bm{X}_k\}$ based on  the length of $\bm{d}_c$, i.e., $l(\bm{d}_c)$ in ascending order and get the ordered sequence $\mathcal{S}=\{\bm{X}_{c_1},\bm{X}_{c_2},\ldots,\bm{X}_{c_k}\}$, where $k$ is the number of  sample groups. Therefore, we can  capture the inner relation between sample groups by gradients, which avoids the isolation of information, and allows the learning process from easy to difficult.

\subsection{Complementary Dual Common-individual Learners}
Inspired by complementary functions in brain memory, we construct a dual common individual learners to respectively accommodate two major knowledge based on three insights, 1) Complementary memory where neocortex remembers long-term and stable skills while hippocampus acquires new and quick knowledge. 2) More neurons are activated with knowledge increasing. 3) Distinguished patterns makes long-standing memory. Overall, our design inherits the complementary learning scheme into respective common container and personality pattern extractor. The common container is devised accounting for the core synaptic function to receive cross-domain common information with elastically increasing collective intelligence, and a task independent personality extractor is to characterize individual task features for quick adaptation. They cooperate with each other for generalized cross-domain learning.
\subsubsection{Elastic Common Container}
Deep learning models iteratively trained with new samples can automatically fuse patterns across all samples. Based on above analysis, the commonality should be expanded when the acquired knowledge is increasing with iteratively feeding into new samples. With  well-organized task sequences, we are expected to mimic such  knowledge expansion in  brains for neural networks. In detail, we borrow a couple of simple yet effective  strategies in deep learning to empower the model with elastic property.  \textit{Dropout} and L2 Regularization with \textit{weight decay} control the overall complexity of  model that potentially avoid over-fitting by the number of active neurons. For Dropout, every neuron can be set as zero (dropout) with probability $p$ and each weight decay  coefficient weight $\lambda$ for L2 controls the importance of L2 item. The smaller probability $p$ and smaller weight decay $\lambda$, the model is more active.  Despite promising, how to quantitatively control the activeness of neurons  by probability $p$ and weight decay $\lambda$ is still unclear. 
\begin{figure}[ht]
    \centering
    \includegraphics[width=\columnwidth]{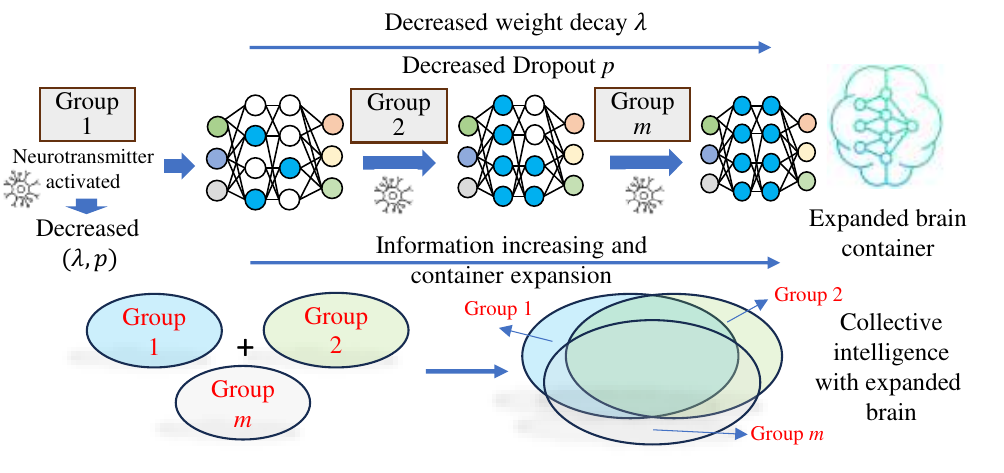}
    \caption{The process of elastic growth of common container}
    \label{fig2-elastic}
\end{figure}
To address such quantitation challenge, we introduce Lemma.~\ref{pro1} from neuroscience~\cite{gulledge2005synaptic}. 
\begin{lemma}\label{pro1}
    The probability of presynaptic neurotransmitter release can be described by a propagation model\cite{bertram1996single,schneggenburger2000intracellular},
    \begin{equation}
        P_{r} = P_0(1-e^{-\tau})
    \end{equation}
\end{lemma}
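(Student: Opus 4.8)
The plan is to derive the saturation law $P_r = P_0\bigl(1-e^{-\tau}\bigr)$ from a first-order relaxation model of presynaptic vesicle release, treating $\tau$ as a dimensionless accumulated stimulus (normalized time since the last action potential, or normalized integrated presynaptic $\mathrm{Ca}^{2+}$ influx) and $P_0$ as the asymptotic ceiling release probability attained when the readily-releasable pool is fully primed. The formula is the standard phenomenological one in the cited literature, so the goal of the proof is to exhibit the minimal dynamical assumptions that force this exact functional form.

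First I would posit the governing dynamics: the instantaneous rate at which the release probability climbs toward its ceiling is proportional to the still-unrealized fraction, i.e.
\begin{equation}
\frac{\mathrm{d}P_r(\tau)}{\mathrm{d}\tau} = k\bigl(P_0 - P_r(\tau)\bigr), \qquad P_r(0) = 0,
\end{equation}
which encodes the biological fact that priming or calcium-binding sites that have already triggered cannot contribute again, so the driving force decays as the pool is consumed. Integrating this linear ODE and absorbing the rate constant $k$ into the scaling of $\tau$ yields exactly $P_r = P_0\bigl(1-e^{-\tau}\bigr)$; the initial condition $P_r(0)=0$ fixes the integration constant, while $\lim_{\tau\to\infty}P_r = P_0$ reproduces the saturating plateau observed in paired-pulse experiments. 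As a complementary route I would include a microscopic Poisson argument: model the release-triggering events (successful $\mathrm{Ca}^{2+}$-sensor binding events at an active zone) as a Poisson process whose expected count over the relevant window equals $\tau$, so that release fails only if no such event occurs, giving $\Pr[\text{no trigger}] = e^{-\tau}$ and hence $\Pr[\text{release}] = P_0\bigl(1-e^{-\tau}\bigr)$, with $P_0$ the fraction of release-competent sites. The agreement of the two derivations is the consistency check I would want before invoking the Lemma downstream.

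The hard part is not the integration but the modeling justification: one must argue that the genuinely biophysical variables — free calcium concentration, buffer kinetics, vesicle-pool size, and the cooperative (often fourth-power) dependence of release on $\mathrm{Ca}^{2+}$ — can legitimately be collapsed onto the single lumped variable $\tau$ with a \emph{linear} relaxation rather than a higher-order, facilitation- or depression-laden one. I would address this by the usual quasi-steady-state and separation-of-timescales reasoning (fast calcium equilibration relative to vesicle recruitment), and by observing that the downstream use here needs only the qualitative shape — monotone increase, concavity, and the asymptote $P_0$ — all of which follow immediately from the ODE above regardless of the precise microscopic rate law. I would close by remarking that $\tau$ plays the role of a normalized proxy (in the paper, analogous to accumulated gradient signal), so the Lemma should be read as fixing the \emph{form} of the activation curve rather than a quantitative biophysical identity.
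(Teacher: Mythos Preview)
Your derivation is mathematically sound, but you should be aware that the paper does \emph{not} prove this lemma at all: it is stated as a result imported directly from the neuroscience literature (the citations to Bertram et al.\ and Schneggenburger--Neher), with no accompanying argument. The paper's only contribution around the lemma is the one-line gloss that $P_0$ is the basic release probability and $\tau$ is the activeness difference between pre- and post-synaptic neurons, after which it immediately moves on to use the functional form as a template for the dropout and weight-decay schedules in Eqs.~(10)--(11).

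So there is no ``paper's own proof'' to compare against; what you have written is strictly more than the paper supplies. Your first-order relaxation ODE and the Poisson-failure argument are both standard and correct routes to the saturating exponential, and your closing caveat --- that the lemma is being used only for the qualitative shape (monotone, concave, bounded by $P_0$) rather than as a quantitative biophysical claim --- is exactly the right reading of how the paper deploys it. If anything, your discussion of the modeling assumptions needed to collapse the multi-variable calcium dynamics onto a single $\tau$ is more careful than the paper warrants, since the downstream use treats $\tau$ purely as a gradient-derived scalar with no further biophysical content.
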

where $P_{r}$ denotes the probability of neurotransmitter release, $P_0$ denotes the basic release probability, $\tau$ is the successive activeness difference between a pre-synaptic neuron and after-synaptic neuron.

Lemma.~\ref{pro1} suggests that the neurotransmitter can induce the activeness of neurons and we can exploit the such electric potential difference to mimic the process. Fortunately, earlier we have discussed that gradient can indicate the consistency between model and new samples, thus we take the second-order difference, the successive gradient variation $|\bm{d}|$ based on Eq.~\ref{eq7} as the propagation degree $\tau$. 

For dropout, we first define the matrix of all learnable parameter in model  $\mathcal{M}$ as $\bm{M}$, and provide the following definition of activated model parameters. 
\begin{definition}{Activated model parameters.}\label{def1}
    \textit{For parameter matrix $[\bm{M}]$, its activeness matrix is defined as,} 
    \begin{equation}
        [\bm{A}]^{x,y}=\begin{cases}
    0 & if \ [\bm{M}]^{x,y}\  is \ dropped\  out,\\
    1 & otherwise.
    \end{cases}
    \end{equation}
\end{definition}
where $[\bm{A}]^{x,y}$, $[\bm{M}]^{x,y}$ denotes the element of matrix $[\bm{A}]$, $[\bm{M}]$ on position $(x,y)$ respectively. Based on the  activeness matrix, the activated model parameters can be updated as, ${\bm{M}} = {\bm{M}\odot}\bm{A}$, where $\odot$ is the Hadamard product. 

The larger number of non-zero elements in $[\bm{A}]$ indicates more model activeness and capture more information with decreased probability $p$.

With above Lemma.~\ref{pro1}, we further model the dropout factor $p_c$ as,
\begin{equation}\label{eqp}
    p_c(\bm{d}_c)=p_0(1-e^{l(\bm{d}_c)-d_{max}})(0<p_0\le1)
\end{equation}
where $p_c$ denotes the dropout factor for domain $c$, $p_0$ is a hyperparameter, $\bm{d}_c$ denotes the vector difference for domain $c$ against $\bm{cat}_{min}$ and $d_{max}$ is  the maximum length of all gradient vectors $\bm{d}_c$. 

Similarly, for the  \textit{weight decay} coefficient of L2 regularization  that controls the model in the optimizer, we can re-write the dynamic update of weight decay according the variation of gradient as below, 
\begin{equation}
\label{eqlambda}
    \lambda_c(\bm{d}_c)=\lambda_0(1-e^{l(\bm{d}_c)-d_{max}})(0<\lambda_0<1)
\end{equation}
In Eq.~\ref{eqlambda}, $\lambda_c$ decreases as $l(\bm{d}_c)$ increases, which realize the elastic growth of the model and improves the generalization of the model.
To this end, we  consider that controlling the dropout value $p_c$ and weight decay coefficient $\lambda_c$ with the vector $\bm{d}_c$ can efficiently realize the gradual release of model activeness, which has been shown in Fig.~\ref{fig2-elastic}.
With our synapse structure, common patterns are iteratively enhanced.
As  new domain arrives, the overall knowledge boundary and learned parameter space of our common container are expanding. Then can do quick and light-weight adaptation from the existing knowledge space to new  domain.  


\subsubsection{Task Independent Personality Extractor}
Besides commonality, the personalized pattern of each task especially new task is also vital for domain adaptation.  
Formally, it is expected to derive an additional personality extractor $g$, which transforms the input $\bm{X}_c$ to the output $\bm{E}_c$, i.e., $\bm{E}_c = g(\bm{X}_c; \bm{W}_g)=\bm{W}_g\bm{X}_c$. Here, we define a new criterion  $\mathcal{D}(\bm{A},\bm{B})$ to measure the difference between tensors $\bm{A}$ and $\bm{B}$, i.e., 
\begin{equation}
    \mathcal{D}(\bm{A},\bm{B})=\sqrt{(\bm{A}\ominus\bm{B})^2}
    \label{measure_D}
\end{equation}
Inspired by distinguishing patterns in memory for activating the neuron activeness, we explore the contrastive learning~\cite{DBLP:journals/corr/abs-2004-11362,hadsell2006dimensionality} to implement  such personality extractor,
\begin{equation}
\begin{aligned}
    \mathcal{R}(\bm{E}_i,\bm{E}_j;\bm{W}_g)=\; &\hat{y}\mathcal{D}(\bm{E}_i,\bm{E}_j)
    \\+(&1-\hat{y})max(0,m-\mathcal{D}(\bm{E}_i,\bm{E}_j))
\end{aligned}
\end{equation}
where $\mathcal{R}(\bm{E}_i,\bm{E}_j)$ refers to contrastive objective function between two representations $\bm{E}_i,\bm{E}_j$. The domain indicator $\hat{y}=1$ if $\bm{X}_i$ and $\bm{X}_j$ come from the same domain while $\hat{y}=0$ indicates two samples from different domains, and $m$ controls the minimum distance between $\bm{X}_i$ and $\bm{X}_j$ from different domains. Since the personality extractor $g$ updates to approach the minimization objective, the representations $\bm{E}_i$ and $\bm{E}_j$ from the same domain become closer and the representations $\bm{E}_i$ and $\bm{E}_j$ from different domains are away from each other. 
\par Therefore, the commonality and personality  can be disentangled with the contrastive learning objective. As samples of a new domain comes, the personality extractor is capable of extracting the representation of the new domain, i.e., $\bm{E}_c = g(\bm{X}_c; \bm{W}_g)$,
and the commonality patterns are learned by the common container for commonality growth, which significantly enhances the ability of the model to comprehend both previous and new knowledge.
\subsection{Adaptive Dynamic Coupler}
To  achieve cross-domain adaptation while maintaining both personality and commonality, we construct an adaptive dynamic coupler to aggregate the commonality and personality.
When the sample group from a new domain $\bm{X}_{k+1}$ comes, the task independent personality extractor first extracts the representation of  $\bm{X}_{k+1}$  by $\bm{E}_{k+1} = g(\bm{X}_{k+1}; \bm{W}_g)$. Then we preserve a list $\bm{G}$ which contains the distance between the representation of the new domain and the trained domains based on Eq.~\ref{measure_D}. To be specific, $\bm{G}$ is defined as,
$\bm{G}=\{\mathcal{D}_1,\mathcal{D}_2,\ldots,\mathcal{D}_k\}$,
where $k$ is the number of trained domains in the elastic common container and $\mathcal{D}_i$ denotes $\mathcal{D}(\bm{E}_{k+1},\bm{E}_i)$. Let $\mathcal{D}_{min}$ be the minimum $\mathcal{D}_i$ in $\bm{G}$ and if $0<\mathcal{D}_{min}<\kappa$ (where $\kappa$ is a threshold which is a hyperparameter), it indicates that the new domain shares  potential commonality with the trained domains. Then we can put it into the common container to compute the gradient and dynamically adjust the dropout factor $p_{k+1}$ and the weight decay coefficient $\lambda_{k+1}$. At last, the common container trains the new domain based upon the adjusted factors and thus absorb the knowledge from the new domain to realize elastic growth. If $\mathcal{D}_{min}\geq\kappa$, which indicates that the new domain almost shares no commonality with trained domains, then we re-instantiate  personality extractor  by initializing learnable parameters with previous extractor for a quick adaptation.  The comparison of representations is implemented by a  gate structure $h$, 
\vspace{-0.1in}
\begin{equation}
\small
\label{eqkappa}
    h(\mathcal{D}_{min},\kappa)= \begin{cases}
    1 & if \ 0<\mathcal{D}_{min}<\kappa\\
    0 & otherwise.
    \end{cases}
\end{equation}
To conclude, the overall learning objective of our model $\mathcal{M}$' to the input $\bm{X}_{k+1}$ can be defined as,
\begin{equation}
\small
\begin{aligned}
    Loss&(\bm{\theta}_{\mathcal{M}'}) = h(\mathcal{D}_{min},\kappa)(L(\mathcal{M}'(\bm{X}_{k+1},\bm{\theta}_\mathcal{M};\bm{\theta}_{\mathcal{M}'}),\bm{Y}_{k+1})
\\+&\lambda_{k+1}||\bm{\theta}_{\mathcal{M}'}||_2^2)
    \\+(&1-h(\mathcal{D}_{min},\kappa))L(\mathcal{M}'(\bm{X}_{k+1},\bm{\theta}_{init};\bm{\theta}_{\mathcal{M}'}),\bm{Y}_{k+1})
\end{aligned}
\end{equation}
where $\bm{Y}_{k+1}$ denotes the target value of $\mathcal{M'}(\bm{X}_{k+1})$, $\lambda_{k+1}$ is computed based on Eq.~\ref{eqp}, ${||\bm{\theta}_\mathcal{M'}||}_2^2$ denotes the square of L2 norm of $\bm{\theta}_\mathcal{M'}$ and $\bm{\theta}_{init}$ denotes the new random initial learnable parameters of $\mathcal{M'}$.
\section{Experiment}
\subsection{Datasets}
\begin{table*}[t]
\caption{Performance comparison on four datasets. Best results are \textbf{bold} and the second best results are \underline{underlined}}
\label{table1}
\vskip 0.15in
\begin{center}
\begin{tiny}
\begin{sc}
\begin{tabular}{cccc|ccc|ccc|ccc}
\toprule
\multirow{2}*{Methods} &\multicolumn{3}{c}{\textbf{NYC}} &\multicolumn{3}{c}{\textbf{CHI}}& \multicolumn{3}{c}{\textbf{SIP}} &\multicolumn{3}{c}{\textbf{SD}}\\
\cmidrule{2-4}\cmidrule{5-7}\cmidrule{8-10}\cmidrule{11-13}
& \textbf{MAE}& \textbf{RMSE} & \textbf{MAPE}& \textbf{MAE}& \textbf{RMSE} & \textbf{MAPE}& \textbf{MAE}& \textbf{RMSE} &\textbf{ MAPE}& \textbf{MAE}& \textbf{RMSE} & \textbf{MAPE}\\
\midrule
\textbf{STGCN}   & 6.774 & 15.853 & \underline{0.374} & 1.518 & 2.804 & 0.415 & 0.753 & 1.492 & 0.219 & 12.477 & 22.055 & 0.182\\
\textbf{STGODE} & 9.522 & 22.555 & 0.481 & 1.543 & 2.792 & 0.433 & \underline{0.732} & \underline{1.460} & \underline{0.211} & 12.300 & 20.808 & 0.180\\
\textbf{GWN}   & 10.263 & 24.535 & 0.546 & 1.520 & 2.873 & 0.421 & 0.737 & 1.473 & 0.212 & 18.890 & 29.220 & 0.230\\
\textbf{STTN} & 7.962 & 19.544 & 0.435 & \underline{1.494} & \textbf{2.699} & 0.409 & \underline{0.732} & 1.461 & 0.212 & 13.092 & 22.054 & 0.183\\
\textbf{AGCRN} &8.254 & 19.301 & 0.488 & 1.543 & 2.805 & 0.427 & 0.743 & 1.469 & 0.215 & 12.225 & 22.094 & 0.179\\
\textbf{ASTGCN} & 10.323 & 25.070 & 0.519 & 1.536 & 2.809 & 0.413 & 0.743 & 1.473 & 0.215 & 13.079 & 22.047 & 0.185\\
\textbf{CMuST} & \underline{6.576} &  \underline{14.954} & 0.459 & 1.498 & 2.781 & \underline{0.388} & 0.737 & 1.497 & 0.219 &\textbf{10.940} & \underline{19.113} & \underline{0.162}\\
\textbf{SynEVO} & \textbf{6.494} & \textbf{14.885} & \textbf{0.358} & \textbf{1.486} & \underline{2.733} & \textbf{0.361} & \textbf{0.697} & \textbf{1.390} & \textbf{0.205} & \underline{10.984} & \textbf{18.654} & \textbf{0.157}\\
\bottomrule
\end{tabular}
\end{sc}
\end{tiny}
\end{center}
\vskip -0.1in
\end{table*}
We collect and process four datasets for our experiments: 1) \textbf{NYC}~\cite{NYCdata}: Include three months of traffic data consisting of four source domains which are CrowdIn, CrowdOut, TaxiPick and TaxiDrop collected from Manhattan in New York City. 2) \textbf{CHI}~\cite{CHIdata}: Consist of three source domains of traffic status, which are Risk, TaxiPick and TaxiDrop collected in the second half of 2023 from Chicago. 3) \textbf{SIP}: Includes three months of traffic data consisting of two source domains which are Flow and Speed collected from Suzhou Industrial Park. 4) \textbf{SD}~\cite{liu2023largest}: Include traffic flow data collected from San Diego in 2019.

\subsection{Evaluation Metrics and Baselines}
We apply three evaluation metrics in our experiments, which are mean absolute error (MAE), root mean square error (RMSE) and mean absolute percentage error (MAPE).  
We exploit seven prevalent baselines for evaluations, including STGNNs (\textbf{STGCN}~\cite{yu2017spatio}, \textbf{STGODE}~\cite{DBLP:journals/corr/abs-2106-12931}, \textbf{GWN}~\cite{wu2019graph}), RNN-based models (\textbf{AGCRN}~\cite{bai2020adaptive}) and attention-based models (\textbf{STTN}~\cite{xu2020spatial}, \textbf{ASTGCN}~\cite{guo2019attention},  \textbf{CMuST}~\cite{yi2024get}).
\subsection{Implementation Details}
We split the datasets into training, validation and testing sets with the ratio of 7:1:2.
Datasets NYC, CHI and SIP are utilized for validation on cross-source and cross-temporal domain tasks, while SD with one attribute but large time span is for cross-temporal domain evaluation. For the first three datasets, on cross-domain evaluation, we leave TaxiPick, TaxiDrop and Speed as the evaluation domain on respective NYC/CHI/SIP sets, and let other data sources to be trained iteratively. For their cross-temporal domain evaluation, we divide one day into four equal periods, and leave the last period of a day on all source domains for evaluation. 
Regarding cross-temporal domain validation on SD, we divide one day into six equal periods and leave the last period for evaluation of temporal domain adaptation.
We run each baseline three times and report the averaged results to reduce the influence of randomness issue. For curriculum-guided task reordering, Adam optimizer~\cite{kingma2014adam} is applied with initialized learning rate of 0.01 and weight decay of 0.001 for the initial learnable model $\mathcal{M}_c$. For complementary dual learners, we use the mean square error (MSE) as the criterion $\mathcal{D}$ of the personality extractor. For Elastic Common Container, the loss criterion is adopted with widely-used MaskedMAELoss. We run STGODE, STTN, CMuST on SD on NVIDIA A100-PCIE-40GB and other experiments on Tesla V100-PCIE-16GB by adapting the model scale with GPU versions.
\begin{table}[t]
\caption{GPU cost comparison between CMuST and SynEVO}
\label{table2}
\vskip 0.1in
\begin{center}
\begin{tiny}
\begin{sc}
\begin{tabular}{ccccc}
\toprule
\multirow{2}*{Methods} & \multicolumn{4}{c}{GPU Cost}\\
\cmidrule{2-5}
& \textbf{NYC} & \textbf{CHI} & \textbf{SIP} & \textbf{SD}\\
\midrule
\textbf{CMuST} & 4034MB & 4118MB & 2450MB & 19533MB\\
\midrule
\textbf{SynEVO}   & 2170MB & 2210MB &2044MB & 4252MB\\
\bottomrule
\end{tabular}
\end{sc}
\end{tiny}
\end{center}
\vskip -0.1in
\end{table}
\subsection{Performance Comparison}
1) \textbf{Comparison among baselines.} Comparison results can be found in Tab.~\ref{table1}, which reports results of cross-source domain adaptation on the NYC, CHI, SIP  and  results of cross-temporal domain adaptation on the SD dataset. In general, our SynEVO model outperforms other baselines across most metrics on four datasets. Compared with those without a commonality extraction mechanism, the overall adaptation performance of CMuST and SynEVO with commonality significantly outperforms them, where SynEVO outperforms other baselines except CMuST on average by about 25.0\% on NYC, 2.6\% on  CHI, 5.8\% on on SIP and 18.3\% on SD. These results indicate the effectiveness of our elastic common container, which allows our model to dynamically grow in an appropriate area and capture the inner relation of different domains. At the same time, our SynEVO outperforms the vanilla backbone GWN by about 35.6\% on average of four datasets. It's worth noting that on NYC and SD, GWN seems to be trapped in a local optimum, and it further demonstrates that the adaptive structure enhances the model's transfer and adaptation. Although attention based model CMuST achieves satisfactory performance,  CMuST exactly costs much more computing resource than our SynEVO as shown in Tab.~\ref{table2}, especially the 4.59 times of computation cost of  SynEVO on large dataset SD. It shows that the the NeuroAI structure can enable light-weight adaptation which ensures superior performance with less computing resource.
2) \textbf{Detailed comparison on cross-temporal domain adaptation and cross-source domain adaptation against SOTA.} We evaluate the performance of cross-temporal domain adaptation and cross-source domain adaptation of  SynEVO on NYC, CHI and SIP against a selected best baseline CMuST, as shown in Tab.~\ref{table3}. Obviously, 
our SynEVO can  outperform CMuST on almost three datasets on MAE, and the improvement on source-domain is more significant than temporal domain, which empirically verifies our NeuroAI-based synapse solution captures the complex commonality across domains and expand the boundary of learning space.


\begin{table}[t]
\caption{Detailed comparison on cross-temporal  adaptation and cross-source  adaptation}
\label{table3}
\begin{center}
\tiny
\begin{sc}
\begin{tabular}{ccccc}
\toprule
& &\textbf{NYC} & \textbf{CHI} & \textbf{SIP}\\
\midrule
\multirow{2}*{Temporal} & \textbf{SynEVO MAE} & 6.278 & 1.457 & 0.683\\
\cmidrule{3-5}
& \textbf{CMuST MAE} &6.457& 1.472 &0.716\\
\midrule
\multirow{2}*{Source} & \textbf{SynEVO MAE}& 6.494 & 1.486 & 0.697\\
\cmidrule{3-5}
& \textbf{CMuST MAE} & 6.576 & 1.498 & 0.737\\
\bottomrule
\end{tabular}
\end{sc}
\end{center}
\vskip -0.1in
\end{table}

\subsection{Ablation Study}
\begin{table*}[t]
\caption{Ablation studies of SynEVO doing cross-temporal domain adaptation on four datasets}
\label{table4}
\vskip 0.15in
\begin{center}
\tiny
\begin{sc}
\begin{tabular}{cccc|ccc|ccc|ccc}
\toprule
\multirow{2}*{Methods} &\multicolumn{3}{c}{\textbf{NYC}} &\multicolumn{3}{c}{\textbf{CHI}}& \multicolumn{3}{c}{\textbf{SIP}} &\multicolumn{3}{c}{\textbf{SD}}\\
\cmidrule{2-4}\cmidrule{5-7}\cmidrule{8-10}\cmidrule{11-13}
& \textbf{MAE}& \textbf{RMSE} & \textbf{MAPE}& \textbf{MAE}& \textbf{RMSE} & \textbf{MAPE}& \textbf{MAE}& \textbf{RMSE} &\textbf{ MAPE}& \textbf{MAE}& \textbf{RMSE} & \textbf{MAPE}\\
\midrule
\textbf{SynEVO}   & 6.278 & 14.570 & 0.357 & 1.457 & 2.696 & 0.359 & 0.683 & 1.369 & 0.193 & 10.984 & 18.654 & 0.157\\
\textbf{SynEVO-REO} & 7.032 & 17.228 & 0.383 & 1.785 & 3.210 & 0.496 & 0.712 & 1.400 & 0.214 & 14.604 & 22.669 & 0.166\\
\textbf{SynEVO-Ela}   & 8.384 & 19.142 & 0.386 & 1.907 & 3.414 & 0.471 & 0.745 & 1.418 & 0.223 & 17.481 & 27.204 & 0.245\\
\textbf{SynEVO-PE} & 7.451 & 16.988 & 0.382 & 1.711 & 3.098 & 0.479 & 0.732 & 1.444 & 0.212 & 16.382 & 24.276 & 0.187\\
\bottomrule
\end{tabular}
\end{sc}
\end{center}
\vskip -0.1in
\end{table*}
In order to uncover the significance of each module to the success of SynEVO, we perform an ablation study on cross-temporal domain adaptation via removing each module on the four datasets. The ablated variants are as follows. \textbf{1) SynEVO-REO:} Remove the module of curriculum-guided sample group reordering. \textbf{2) SynEVO-Ela: } Remove the dynamic adjustment of dropout factor $p$ and weight decay coefficient $\lambda$ with a static value, e.g., $p=0.1, \lambda = 0.001$. \textbf{3) SynEVO-PE:} Without relation comparison on individual and commonality, any domain even unrelated sample groups can be fed into the model without a judgment gate. 

Tab.\ref{table4} shows the results of  ablation studies. In general,  removing abovementioned  modules leads to  the consistent  performance drops.  When removing the module of elastic growth, the performance drops the most by 44.2\% on MAE, indicating the most important structure of elastic common container for model evolution and adaptation. More specifically, when removing sample re-ordering, the performance drops by about 18.1\%, while  personality extractor is discarded, the performance drops by about 19.1\%, which shows the personality and re-instantiate mechanism is also critical for ensuring model robustness and consistency.
 
\subsection{Detailed Analysis}
\textbf{Uncovered sample group sequences for curriculum learning.} In curriculum guided task reordering, the reordered the input group samples are illustrated in Fig.~\ref{fig3}(a) based on the gradients. In our experiments, we find that 
domains from the same source are not necessarily next to each other in the ordered sequence $\mathcal{S}$, which verifies that SynEVO has successfully uncovered hidden correlation information between domains. Moreover, it is demonstrated that reordering our input sample groups  can learn certain commonality, empirically providing evidence for the effectiveness of  cross-domain learning. 

\textbf{Observed quick adaptation via loss behavior.} In elastic common container, the sample groups are periodically fed into models. We let SynEVO elastically grow to absorb the commonality by iteratively feeding sample groups, and visualize the training loss of two consecutive learning cycles on SD in Fig.~\ref{fig3}(b). The blue curve denotes the first training cycle while the red one denotes the second. These two cycles share the same training data and order. Every mutation in the curve represents the input of a new domain. Obviously, the loss of the second cycle is much lower than the former one and the loss drops quickly after new domain input, which shows the  quick adaptation by constructing learning tasks in a cycled  and elastic manner.

\textbf{Effective zero-shot adaptation.} To further evaluate the adaptation performance of SynEVO, we conduct zero-shot cross-temporal domain adaptation, i.e., testing without training,  and make comparisons  with the backbone GWN, where results are shown in Tab.\ref{table5}. Obviously, SynEVO outperforms better than GWN by averagely 29.8\%  on the four datasets, and it numerically proves that SynEVO has captured the hidden commonality of various input data so that it can achieve effective and even superior performance on zero-shot tasks.

\textbf{More empirical analysis on task reordering and design of commonality extraction.} We supplement the experiments of \textbf{1) H2E:} reordering tasks from \textbf{hard to easy} to emphasize the importance of training order. \textbf{2) SynEVO-IL:} eliminating iterative learning for commonality, i.e., only training and testing on one dataset. \textbf{3) DER:} setting a different neural expansion rate $p_c$, e.g., $p_c(\bm{d}_c)=p_0/l(\bm{d}_c)$. Results are shown in Tab.\ref{table6}. The reversed order falls into inferior performances, which emphasizes the significance of reordering the tasks from easy to difficult in our design. Moreover, hard-to-easy performances are better than random ordering of SynEVO-REO, which may be attributed to common relations between neighboring tasks as they are ordered even the reverse one. Based on the experimental results, we can conclude our commonality learner and the setting of p are reasonable and empirically justified.

\begin{figure}[t]
\vskip 0in
\begin{center}
\centerline{\includegraphics[width=1\columnwidth]{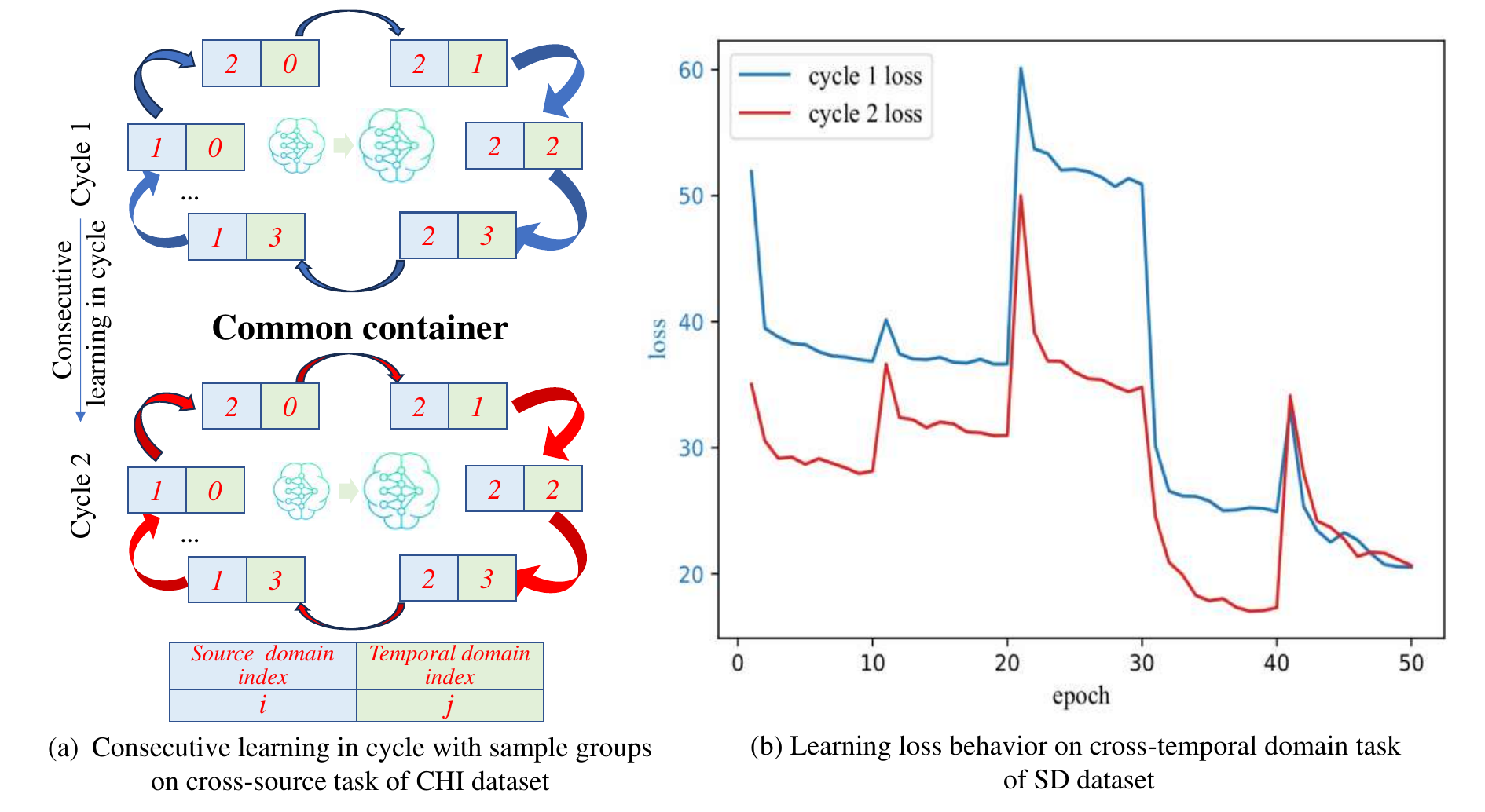}}
\caption{Training order on CHI and training loss behavior on SD}
\label{fig3}
\end{center}
\vskip 0in
\vspace{-0.4in}
\end{figure}

\begin{table}[t]
\caption{Comparison of zero-shot cross-temporal domain adaptation performance}
\label{table5}

\vskip 0.1in
\begin{center}
\begin{tiny}
\begin{sc}
\begin{tabular}{cccccc}
\toprule
& &\textbf{NYC} & \textbf{CHI} & \textbf{SIP} & \textbf{SD}\\
\midrule
\multirow{3}*{SynEVO} & \textbf{MAE} & 13.420 & 1.995 & 0.775 & 16.981\\
\cmidrule{3-6}
& \textbf{RMSE} &32.059& 3.769 &1.510 & 25.197\\
\cmidrule{3-6}
& \textbf{MAPE} &0.668&0.369&0.217 & 0.214\\
\midrule
\multirow{3}*{GWN} & \textbf{MAE}& 17.091 & 4.109 & 1.039 & 21.446\\
\cmidrule{3-6}
& \textbf{RMSE} & 40.802 & 8.385 & 2.056 & 30.736\\
\cmidrule{3-6}
& \textbf{MAPE} & 0.856& 0.742& 0.264 & 0.271\\
\bottomrule
\end{tabular}
\end{sc}
\end{tiny}
\end{center}
\vskip -0.25in
\end{table}

\begin{table*}[t]
\caption{More empirical analysis on task reordering and design of commonality extraction}
\label{table6}
\vskip 0.15in
\begin{center}
\tiny
\begin{sc}
\begin{tabular}{cccc|ccc|ccc|ccc}
\toprule
\multirow{2}*{Methods} &\multicolumn{3}{c}{\textbf{NYC}} &\multicolumn{3}{c}{\textbf{CHI}}& \multicolumn{3}{c}{\textbf{SIP}} &\multicolumn{3}{c}{\textbf{SD}}\\
\cmidrule{2-4}\cmidrule{5-7}\cmidrule{8-10}\cmidrule{11-13}
& \textbf{MAE}& \textbf{RMSE} & \textbf{MAPE}& \textbf{MAE}& \textbf{RMSE} & \textbf{MAPE}& \textbf{MAE}& \textbf{RMSE} &\textbf{ MAPE}& \textbf{MAE}& \textbf{RMSE} & \textbf{MAPE}\\
\midrule
\textbf{H2E} & 7.217 & 18.807 & 0.415 & 1.632 & 3.066 & 0.405 & 0.705 & 1.394 & 0.208 & 11.636 & 19.789 & 0.163\\
\textbf{SynEVO-IL}   & 8.201 & 19.090 & 0.423 & 1.554 & 2.887 & 0.385 & 0.711 & 1.412 & 0.216 & 13.128 & 20.890 & 0.220\\
\textbf{DER} & 7.213 & 16.310 & 0.422 & 1.550 & 2.765 & 0.367 & 0.705 & 1.399 & 0.207 & 11.944 & 19.599 & 0.168\\
\bottomrule
\end{tabular}
\end{sc}
\end{center}
\vskip -0.1in
\end{table*}

\subsection{Hyperparameter Sensitivity Analysis}
We varied Dropout  $p_0$  from \{0.1, 0.3, 0.5, 0.7, 1\},  weight decay coefficient $\lambda_0$ from \{0.01, 0.03, 0.05, 0.07, 0.1\}, and distance threshold $\kappa$  from \{$1\times10^3$, $1\times10^4$, $1\times10^5$, $1\times10^6$\}. Results shown in Fig.\ref{fig5}-Fig.\ref{fig8} indicate that the optimal settings are $\kappa=1\times10^3$ on all datasets, $p_0=0.5, \lambda_0=0.05$ on NYC and SIP, $p_0=1,\lambda_0=0.1$ on CHI and $p_0=0.7,\lambda_0=0.07$ on SD. For $\kappa$, if the threshold is extremely small, it means  no new domain is allowed in, which will violate the principle of 'harmony with diversity for collective intelligence' then we can observe that with increasing and more relaxed condition for fusion, more noise will be introduced to reduce the performances. Thus the trade-off between commonality and individual feature extraction should be obtained during model design.

\section{Conclusion}
In this paper, we propose a novel NeuroAI framework SynEVO to enable cross-domain spatiotemporal learning for few-shot domain adaptation. From neuroscience theories, a curriculum-guided sample group re-ordering, 
a couple of complementary dual learners which includes an elastic common container, and a task independent personality extractor are proposed to capture commonality in an elastic manner. The adaptive dynamic coupler determines whether the new feeding samples can be aggregated into SynEVO for achieving model evolution. Extensive experiments on both cross-source and cross-temporal domains validate the 0.5\% to 42\% improvements against baselines. For future work, we plan to mine the more inner mechanism of human brain to facilitate the generalization of general AI models. Moreover, our model can be generally nested within other neural networks. In other areas, it is applicable to utilize our evolvable `data-model' collaboration to decouple the invariant and variable patterns, and reconstruct the OOD distribution with new patterns.
\section*{Acknowledgements}
This work was supported by the National Natural Science Foundation of China (No.12227901), Natural Science Foundation of Jiangsu Province (BK.20240460, BK.20240461), the grant from State Key Laboratory of Resources and Environmental Information System.
\section*{Impact Statement}
Our work focuses on the research on the aggregation of data in different fields in cities, which has strong practical significance for the deployment of various data-driven urban applications and the improvement of the convenience of urban life. Although our work is inspired by Neuroscience, the theories and data we use are publicly available and do not collect information from any human, animal, or private individuals, so there are no ethical issues.



\nocite{langley00}

\bibliography{main}
\bibliographystyle{icml2025}

\newpage
\appendix
\onecolumn
In the Appendix, we provide the necessary proof our proposition and supplementary experiments for model evaluation. 
\section{Proof of Proposition~\ref{pro_cross}}
Cross-domain learning can increase the learned information.
\label{App:prof}
\begin{proposition}{Increased information  with cross-domain learning.}
Given spatiotemporal data observations from different sources $\{\bm{X}_1, \bm{X}_2, \bm{X}_3, ..., \bm{X}_k\}$ and the there must share commonality among domain data patterns, i.e., 
\begin{equation}
 \forall{i,j}(1\le i<j\le k),I(\bm{X}_i;\bm{X}_j)>0
    \end{equation}
then the well-learned information from the cross-domain learning model $\mathcal{M}$ is increased by continually receiving domain knowledge, i.e.,
\begin{equation}
\mathit{Info}(\mathcal{M}(\bm{X}_1, ...,\bm{X}_k; \bm{\theta}_\mathcal{M}))> \mathit{Info}(\mathcal{M}(\bm{X}_1, ...,\bm{X}_{k-1}; \bm{\theta}_\mathcal{M}))> ... >\mathit{Info}(\mathcal{M}(\bm{X}_1; \bm{\theta}_\mathcal{M}))
\end{equation}
where $\mathit{Info}$ is the knowledge information encapsulated in model $\mathcal{M}$. 
\end{proposition}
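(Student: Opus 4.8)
The plan is to turn the informal functional $\mathit{Info}$ into an information-theoretic quantity and then read the chain of inequalities off the chain rule of entropy. First I would fix the surrogate: modelling a sufficiently expressive, well-trained $\mathcal{M}$ as a statistic of the sources it has absorbed, I would set $\mathit{Info}(\mathcal{M}(\bm{X}_1,\dots,\bm{X}_m;\bm{\theta}_\mathcal{M})) := I(\bm{\theta}_\mathcal{M};\bm{X}_1,\dots,\bm{X}_m)$, the Shannon information the fitted parameters retain about the training data. Under the idealization that the backbone can represent the joint law of what it has seen, this coincides with the joint entropy $H(\bm{X}_1,\dots,\bm{X}_m)$, and I would run the argument with the latter. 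I would also note that this idealization is precisely what the elastic common container of the methodology provides: a fixed parameter budget would impose an information-bottleneck ceiling $I(\bm{\theta}_\mathcal{M};\cdot)\le H(\bm{\theta}_\mathcal{M})$, whereas a container that grows as new domains arrive removes that cap.

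The core step is then the telescoping identity, writing $\mathit{Info}_m$ for $\mathit{Info}(\mathcal{M}(\bm{X}_1,\dots,\bm{X}_m;\bm{\theta}_\mathcal{M}))$: by the chain rule, $\mathit{Info}_m-\mathit{Info}_{m-1}=H(\bm{X}_1,\dots,\bm{X}_m)-H(\bm{X}_1,\dots,\bm{X}_{m-1})=H(\bm{X}_m\mid\bm{X}_1,\dots,\bm{X}_{m-1})$, which I would rewrite as $H(\bm{X}_m)-I(\bm{X}_m;\bm{X}_1,\dots,\bm{X}_{m-1})$. It then remains only to show this residual conditional entropy is strictly positive, after which the whole chain follows by iterating $m$ from $2$ to $k$.

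Strict positivity decomposes into the two halves of the ``harmony with diversity'' slogan. For harmony, monotonicity of mutual information under adjoining conditioning variables gives $I(\bm{X}_m;\bm{X}_1,\dots,\bm{X}_{m-1})\ge\max_{i<m}I(\bm{X}_m;\bm{X}_i)>0$ by the pairwise-commonality hypothesis, so every new source genuinely overlaps the accumulated knowledge, which is exactly what licenses cross-domain transfer. For diversity, a genuinely distinct source $\bm{X}_m$ is not almost surely a deterministic function of $\bm{X}_1,\dots,\bm{X}_{m-1}$, so $I(\bm{X}_m;\bm{X}_1,\dots,\bm{X}_{m-1})<H(\bm{X}_m)$; combining, $0<H(\bm{X}_m\mid\bm{X}_1,\dots,\bm{X}_{m-1})<H(\bm{X}_m)$, i.e. $\mathit{Info}_{m-1}<\mathit{Info}_m$, and induction yields the claimed strictly increasing chain.

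The hard part is not the information-theoretic manipulation, which is a two-line chain-rule computation, but the modelling step that precedes it: choosing a surrogate for ``well-learned information'' that is at once faithful to the informal claim and provably monotone, and justifying the expressive-enough/lossless idealization (equivalently, arguing that the elasticity of the container defeats the capacity bottleneck). A secondary subtlety is that the pairwise-MI hypothesis by itself yields only the weak inequality; the strict form additionally needs the non-degeneracy assumption ``no new source is a deterministic function of the earlier ones,'' which I would state explicitly alongside $I(\bm{X}_i;\bm{X}_j)>0$. As a fallback that sidesteps the expressiveness idealization, I would be prepared to instead define $\mathit{Info}$ through the model's predictive information about a held-out reference $\bm{X}^{\star}$ correlated with all sources and obtain monotonicity from the data-processing inequality together with the same conditional-entropy decomposition.
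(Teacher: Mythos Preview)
Your proposal is correct and shares the same central identity with the paper, $H(\bm{X}_m\mid\bm{X}_1,\dots,\bm{X}_{m-1})=H(\bm{X}_m)-I(\bm{X}_m;\bm{X}_1,\dots,\bm{X}_{m-1})$, but you push it in a different direction. You make $\mathit{Info}$ concrete as the joint entropy $H(\bm{X}_1,\dots,\bm{X}_m)$ (justified by the elastic-capacity argument), telescope via the chain rule, and show each increment $H(\bm{X}_m\mid\bm{X}_{1:m-1})$ is strictly positive using an explicit non-degeneracy assumption. The paper instead leaves $\mathit{Info}$ informal, invokes the Information Bottleneck to motivate the entropy viewpoint, adds a further assumption that all marginal entropies are equal, $H(\bm{X}_i)=H(\bm{X}_j)$, and then argues that $I(\bm{X}_m;\bm{X}_{1:m-1})$ is strictly increasing in $m$, concluding that the \emph{conditional} entropy $H(\bm{X}_m\mid\bm{X}_{1:m-1})$ is strictly \emph{decreasing} in $m$; this is read as ``expanding the boundary of learning.'' So the paper establishes a different monotone quantity (decreasing residual uncertainty about the next domain) under a stronger hypothesis set, which is closer to ``each new domain is easier to absorb'' than to ``more information is stored.'' Your version ties more directly to the proposition as literally stated, needs fewer side assumptions, and is more careful in isolating exactly where the pairwise-MI hypothesis is used (for overlap/transfer) versus where the extra non-degeneracy condition is needed (for the strict inequality); the paper leaves the latter implicit.
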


\begin{proof}
Here we provide the proof from the perspective of information theory. We first 
define   the mutual information $I$ as,
\begin{equation}
\begin{aligned}
    I(\bm{X};\bm{Y})=&H(\bm{X})-H(\bm{X}|\bm{Y})=H(\bm{Y})-H(\bm{Y}|\bm{X})\\
                & =H(\bm{X})+H(\bm{Y})-H(\bm{X},\bm{Y})
\end{aligned}
\end{equation}
where $H$ denotes the information entropy.

In our learning scheme, by
denoting the input observations $\bm{X}$ and predicted output $\widehat{\bm{Y}}$, we transfer the learning objective of MAE into the following equation via the theory of Information Bottleneck~\cite{tishby2000information},
\begin{equation}
L(\bm{X},\hat{\bm{Y}},\bm{Z})=I(\bm{X};\bm{Z})-\beta I(\bm{Z};\hat{\bm{Y}})
\end{equation}
where $\beta$ is a hyperparameter controlling the balance between the mutual information $I(\bm{X},\bm{Z})$ and $I(\bm{Z},\hat{\bm{Y}})$. 
Minimizing $I(\bm{X};\bm{Z})$ aims to reduce the information redundancy of input $\bm{X}$ while maximizing $I(\bm{Z};\hat{\bm{Y}})$ aims to make sure the representation $\bm{Z}$ keeps enough information of output $\hat{\bm{Y}}$. 

Considering the relationship between information entropy and mutual information, we have,
    \begin{equation}\label{peq1}
        I(\bm{X};\bm{Y})=H(\bm{X})-H(\bm{X}|\bm{Y})
    \end{equation}
we can transform above Eq.\ref{peq1} into,
\begin{equation}\label{peq2}
        H(\bm{X}|\bm{Y})=H(\bm{X})-I(\bm{X};\bm{Y})
\end{equation}
 Then, $\forall i(1\le i <k),$ we can compute $H(\bm{X}_i|\bm{X}_1,\bm{X}_2,\ldots,\bm{X}_{i-1})$ as,
    \begin{equation}
    \label{peq3}
H(\bm{X}_i|\bm{X}_1,\bm{X}_2,\ldots,\bm{X}_{i-1})=H(\bm{X}_i)-I(\bm{X}_i;\bm{X}_1,\bm{X}_2,\ldots,\bm{X}_{i-1})
    \end{equation}
    Similarly,
    \begin{equation}
    \label{peq4}
H(\bm{X}_{i+1}|\bm{X}_1,\bm{X}_2,\ldots,\bm{X}_{i})=H(\bm{X}_{i+1})-I(\bm{X}_{i+1};\bm{X}_1,\bm{X}_2,\ldots,\bm{X}_{i})
    \end{equation}
    Actually, for any two datasets, the  initial learning uncertainties of corresponding datasets are equilibrated, and denoted as, 
\begin{equation}\label{a3}
        \forall{i,j}(1\le i<j\le k),H(\bm{X}_i)=H(\bm{X}_j)
    \end{equation}
    Then we can continue our derivation into,
    \begin{equation}
    \label{peq5}
        H(\bm{X}_i)=H(\bm{X}_{i+1})
    \end{equation}
   Then we have assumed the involved datasets are sharing common patterns, with $\forall{i,j}(1\le i<j\le k),I(\bm{X}_i;\bm{X}_j)>0$, we can obtain,
    \begin{equation}
    \label{peq6}    I(\bm{X}_i;\bm{X}_1,\bm{X}_2,\ldots,\bm{X}_{i-1})<I(\bm{X}_{i+1};\bm{X}_1,\bm{X}_2,\ldots,\bm{X}_{i})
    \end{equation}
    Therefore, based on Eq.\ref{peq3}, Eq.\ref{peq4}, Eq.\ref{peq5}, Eq.\ref{peq6}, we can conclude that,
    \begin{equation}   H(\bm{X}_i|\bm{X}_1,\bm{X}_2,\ldots,\bm{X}_{i-1})> H(\bm{X}_{i+1}|\bm{X}_1,\bm{X}_2,\ldots,\bm{X}_{i})
    \end{equation}
    which means the model is expanding the boundary of learning.
    Moreover, according to Eq.\ref{peq1}, we can further derive that,
    \begin{equation}
        H(\bm{X}_i|\bm{X}_j)<H(\bm{X}_i)
    \end{equation}
    which means the commonality between input data contributes to reducing the uncertainty of the tasks. 
\end{proof}

\section{Hyperparameter sensitivity visualization}
To determine the best hyperparameters of our SynEVO and support the completeness of
 our experiments, we varied base Dropout factor $p_0$ in Eq.\ref{eqp} from \{0.1, 0.3, 0.5, 0.7, 1\},  base weight decay coefficient $\lambda_0$ in Eq.\ref{eqlambda} from \{0.01, 0.03, 0.05, 0.07, 0.1\}, and distance threshold $\kappa$  from \{$1\times10^3$, $1\times10^4$, $1\times10^5$, $1\times10^6$\}. The experimental results are displayed in Fig.\ref{fig5}-Fig.\ref{fig8}. 1) Fig.\ref{fig5} shows that on NYC, the performance of SynEVO firstly increases with the increase of $p_0$ and $\lambda_0$ while drops when $p_0\geq0.5,\lambda_0\geq0.05$. This result demonstrates that on NYC, if $p_0$ and $\lambda_0$ are too small, the model is initially too complex so that it fails to learn from easy to hard, thus trapped in a local optimum. If $p_0$ and $\lambda_0$ are too large, the model will fail to  be sufficiently evolved and cannot capture enough commonality of input data. 2) For the distance threshold $\kappa$, if it's extremely small, it means  no new domain is allowed into SynEVO, which will violate the principle of `harmony with diversity for collective intelligence'. If it's too large, then much noise will be introduced to reduce the performance of SynEVO, polluting the commonality of the trained data.
\begin{figure*}[t]
\centering
\begin{minipage}{0.3\textwidth}
    \centering
    \includegraphics[width=\textwidth]{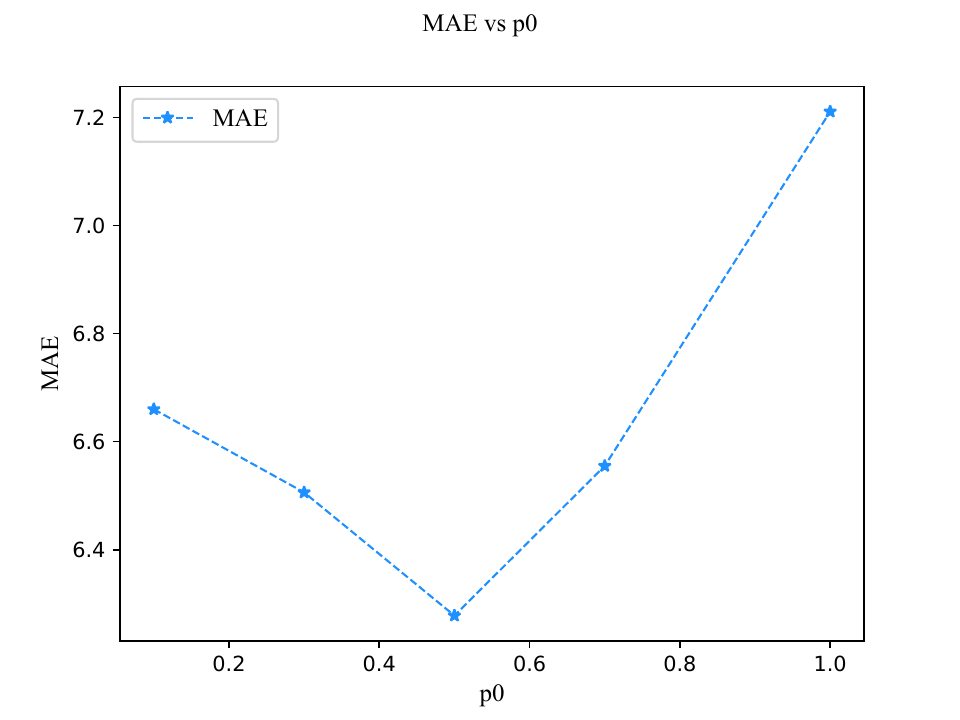}
    \label{subfig:mae}
\end{minipage}%
\hspace{0pt} 
\begin{minipage}{0.3\textwidth}
    \centering
    \includegraphics[width=\textwidth]{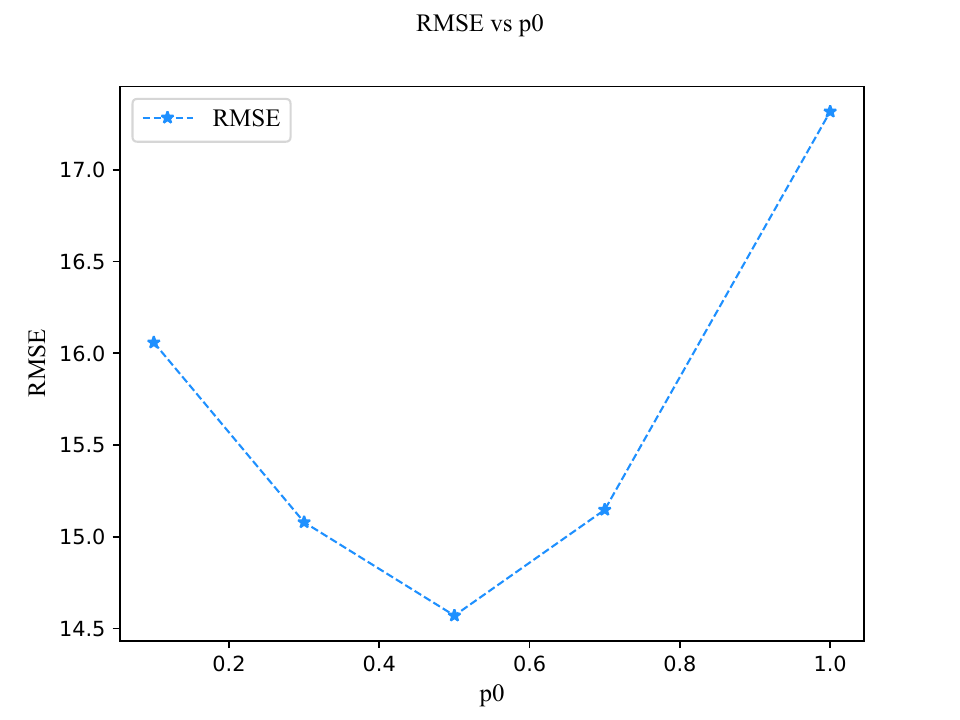}
    \label{subfig:rmse}
\end{minipage}%
\hspace{0pt} 
\begin{minipage}{0.3\textwidth}
    \centering
    \includegraphics[width=\textwidth]{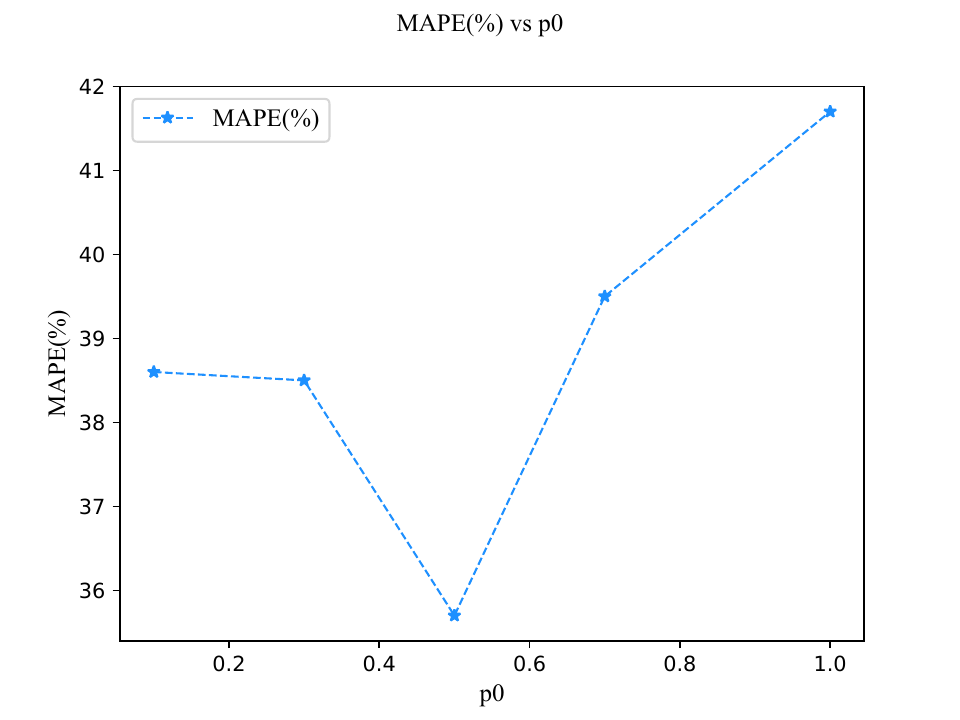}
    \label{subfig:mape}
\end{minipage}
\hspace{0pt} 
\begin{minipage}{0.3\textwidth}
    \centering
    \includegraphics[width=\textwidth]{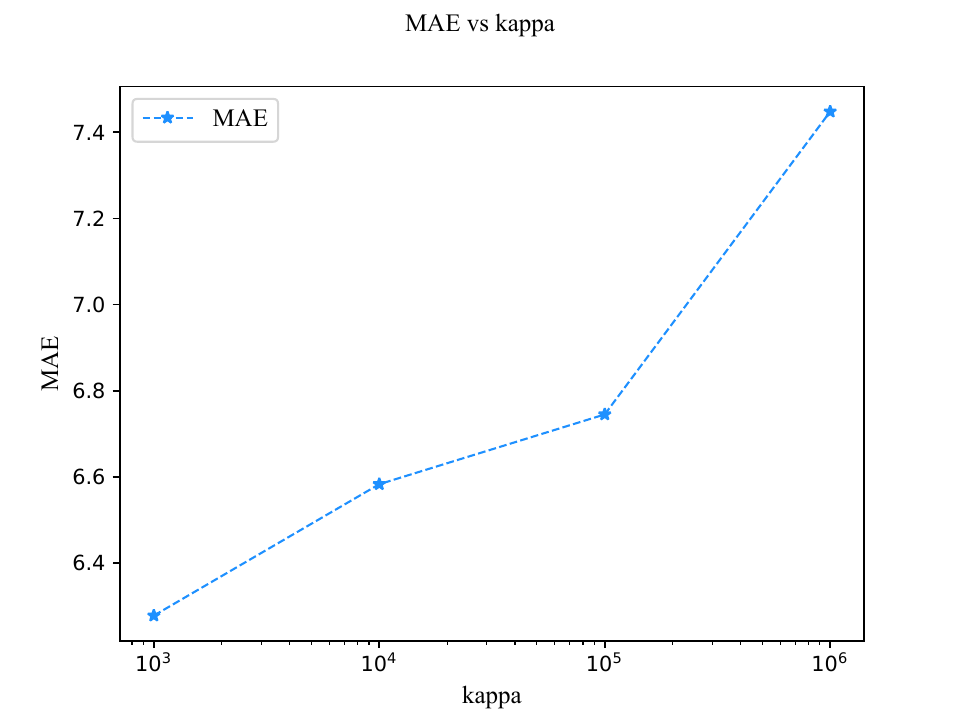}
    \label{subfig:mape}
\end{minipage}
\hspace{0pt} 
\begin{minipage}{0.3\textwidth}
    \centering
    \includegraphics[width=\textwidth]{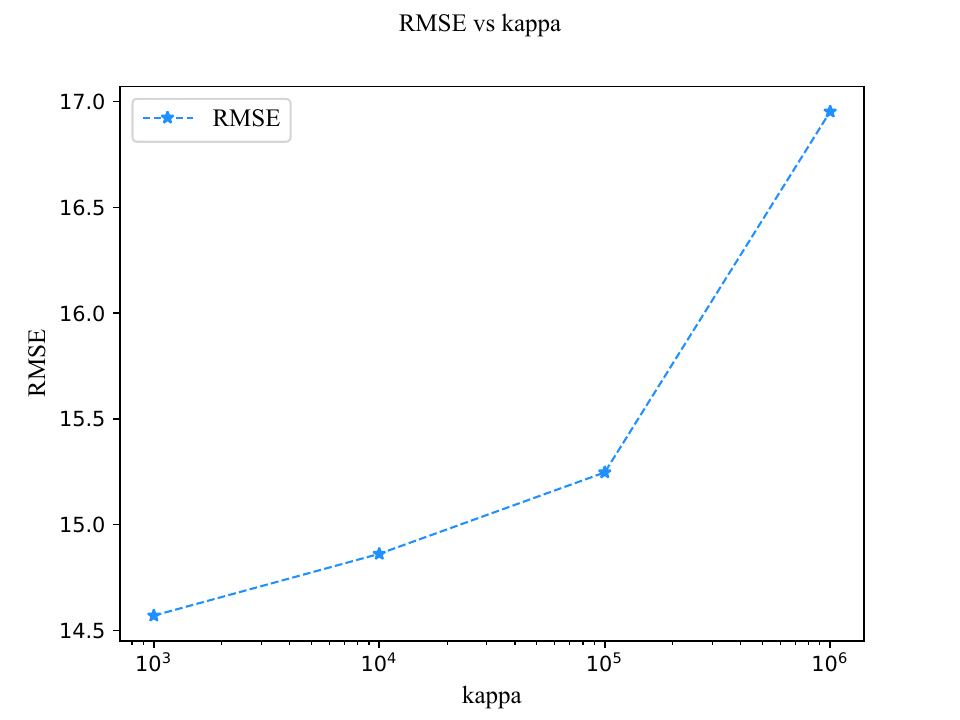}
    \label{subfig:mape}
\end{minipage}
\hspace{0pt} 
\begin{minipage}{0.3\textwidth}
    \centering
    \includegraphics[width=\textwidth]{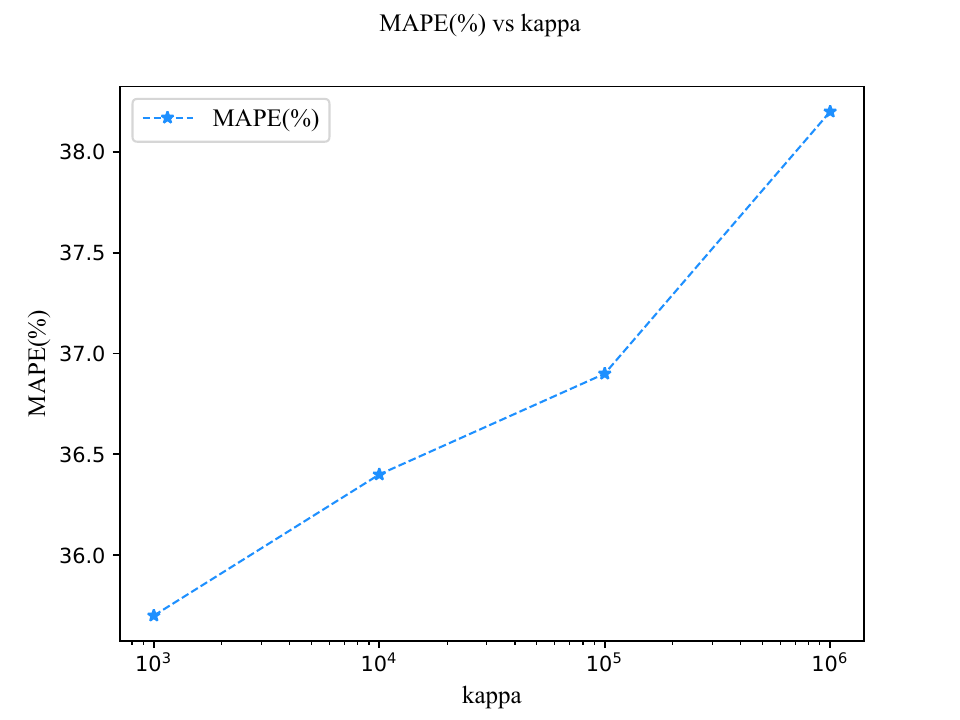}
    \label{subfig:mape}
\end{minipage}
\hspace{0pt} 
\begin{minipage}{0.3\textwidth}
    \centering
    \includegraphics[width=\textwidth]{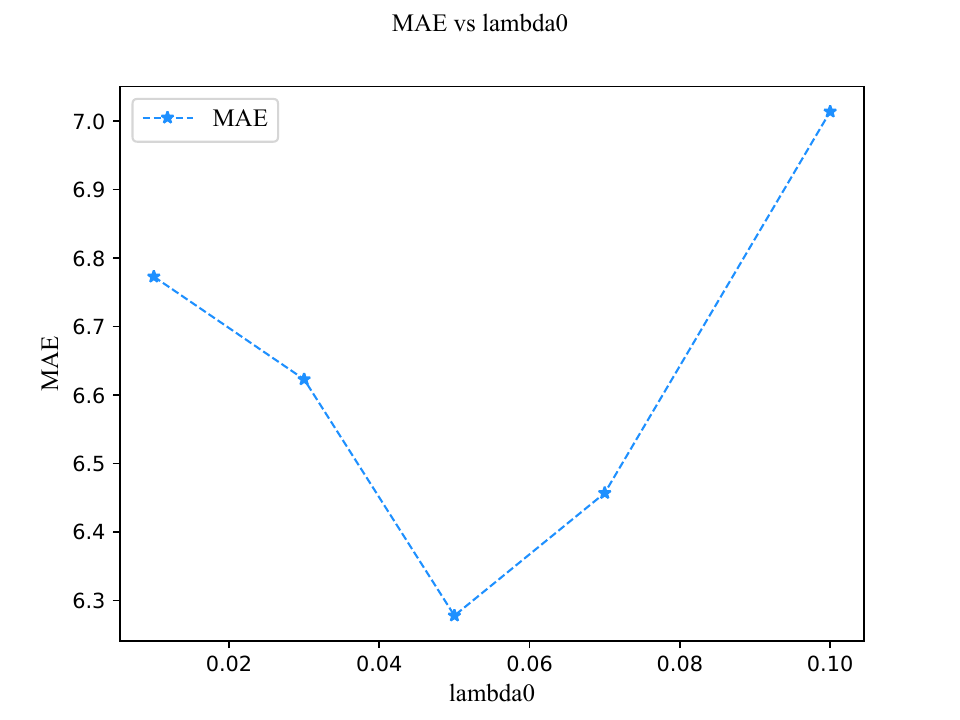}
    \label{subfig:mape}
\end{minipage}
\hspace{0pt} 
\begin{minipage}{0.3\textwidth}
    \centering
    \includegraphics[width=\textwidth]{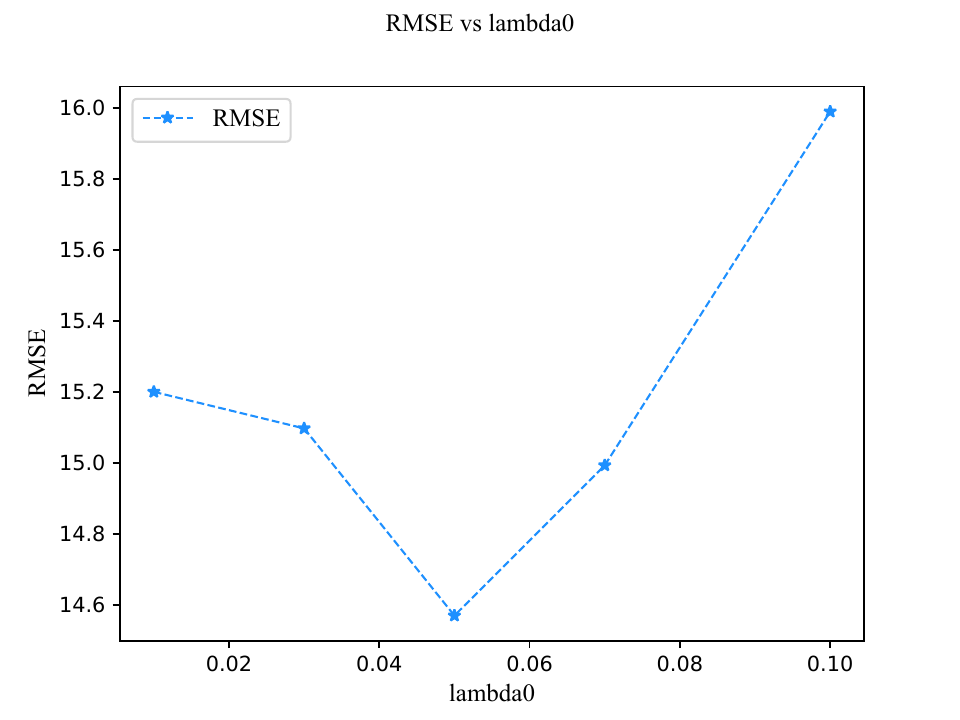}
    \label{subfig:mape}
\end{minipage}
\hspace{0pt} 
\begin{minipage}{0.3\textwidth}
    \centering
    \includegraphics[width=\textwidth]{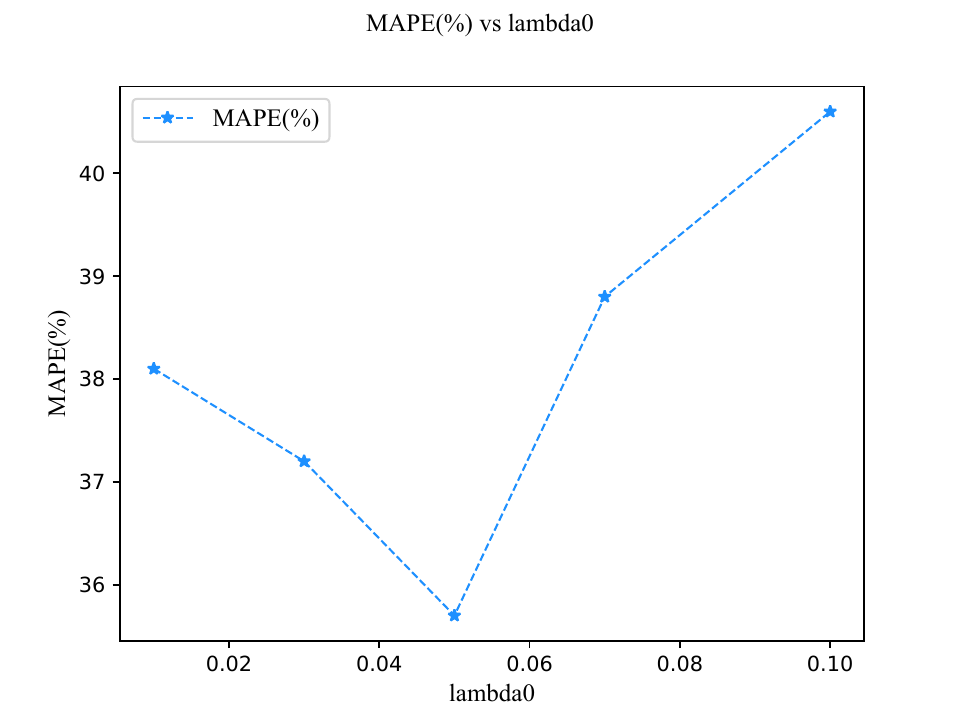}
    \label{subfig:mape}
\end{minipage}
\caption{Hyperparameter sensitivity on NYC}
\label{fig5}
\end{figure*}

\begin{figure*}[htbp]
\centering
\begin{minipage}{0.3\textwidth}
    \centering
    \includegraphics[width=\textwidth]{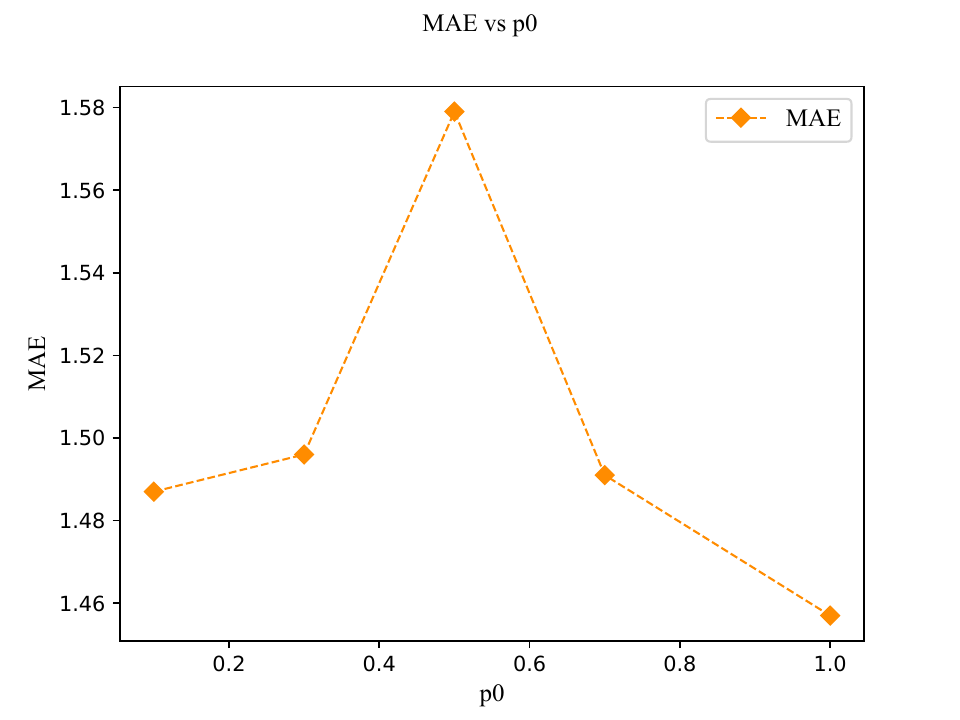}
    \label{subfig:mae}
\end{minipage}%
\hspace{0pt} 
\begin{minipage}{0.3\textwidth}
    \centering
    \includegraphics[width=\textwidth]{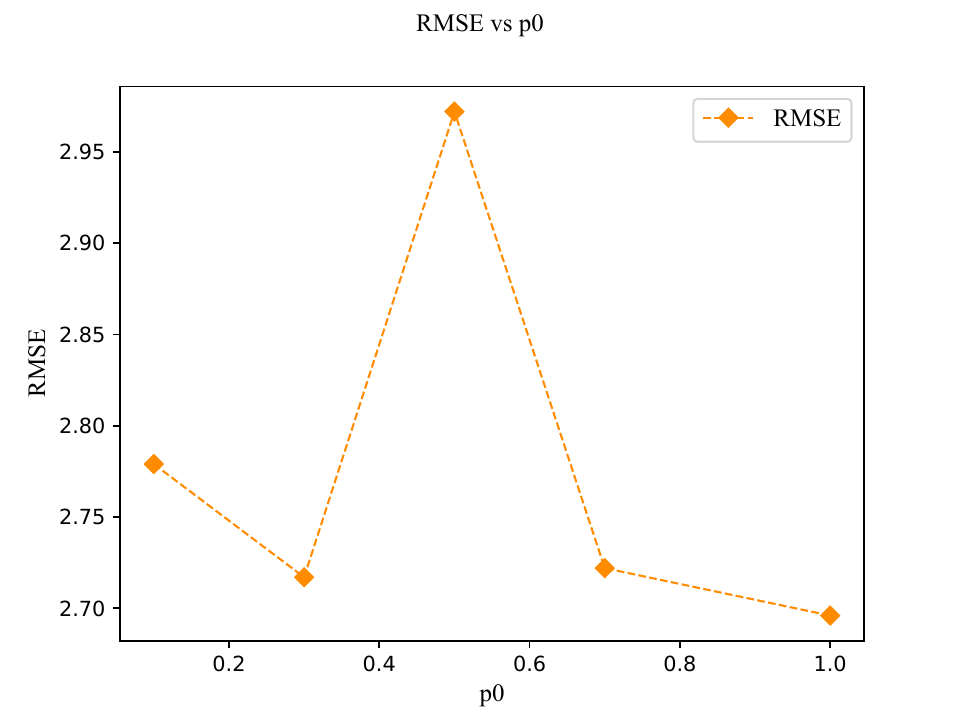}
    \label{subfig:rmse}
\end{minipage}%
\hspace{0pt} 
\begin{minipage}{0.3\textwidth}
    \centering
    \includegraphics[width=\textwidth]{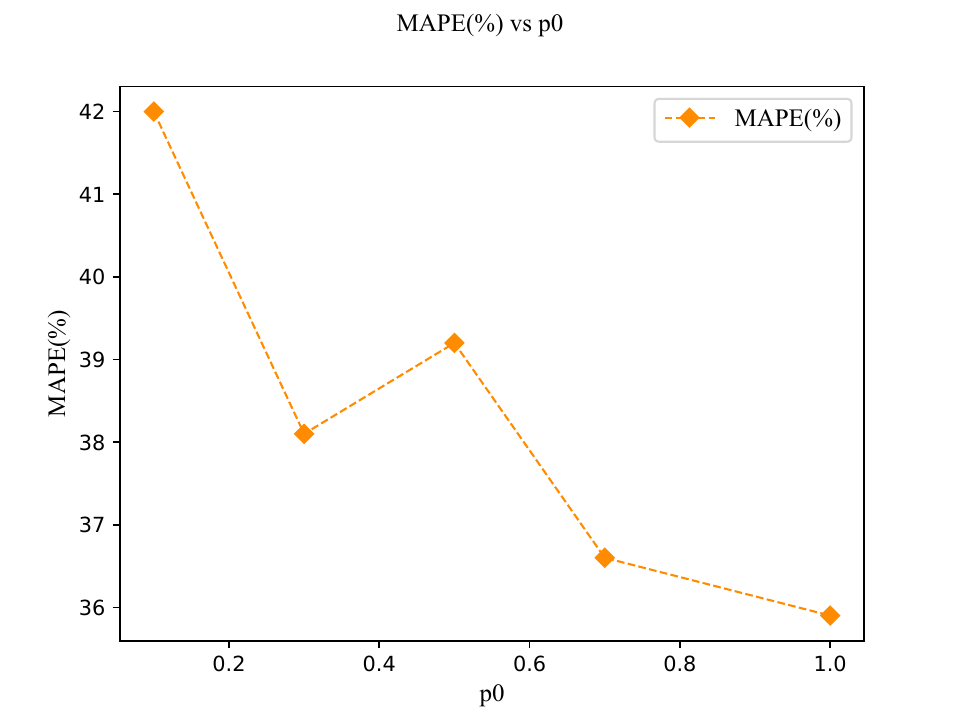}
    \label{subfig:mape}
\end{minipage}
\hspace{0pt} 
\begin{minipage}{0.3\textwidth}
    \centering
    \includegraphics[width=\textwidth]{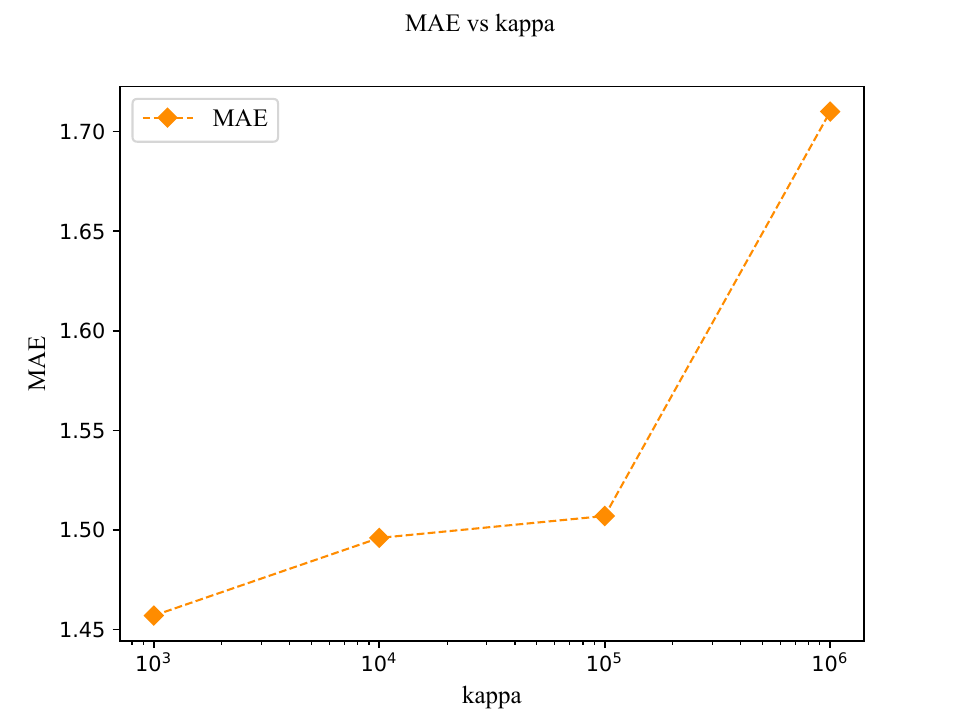}
    \label{subfig:mape}
\end{minipage}
\hspace{0pt} 
\begin{minipage}{0.3\textwidth}
    \centering
    \includegraphics[width=\textwidth]{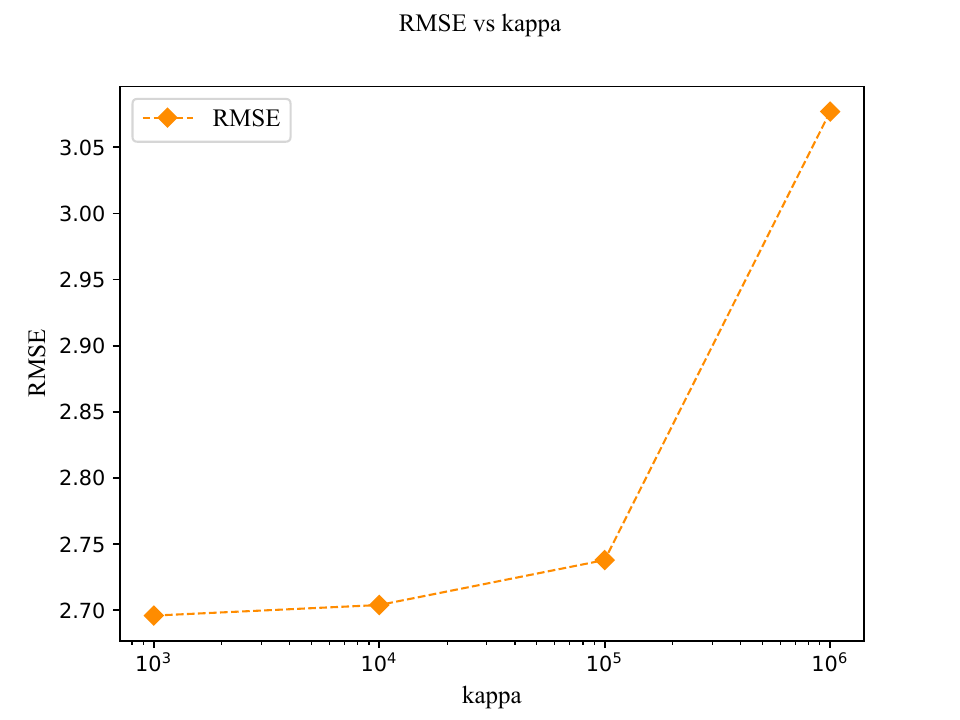}
    \label{subfig:mape}
\end{minipage}
\hspace{0pt} 
\begin{minipage}{0.3\textwidth}
    \centering
    \includegraphics[width=\textwidth]{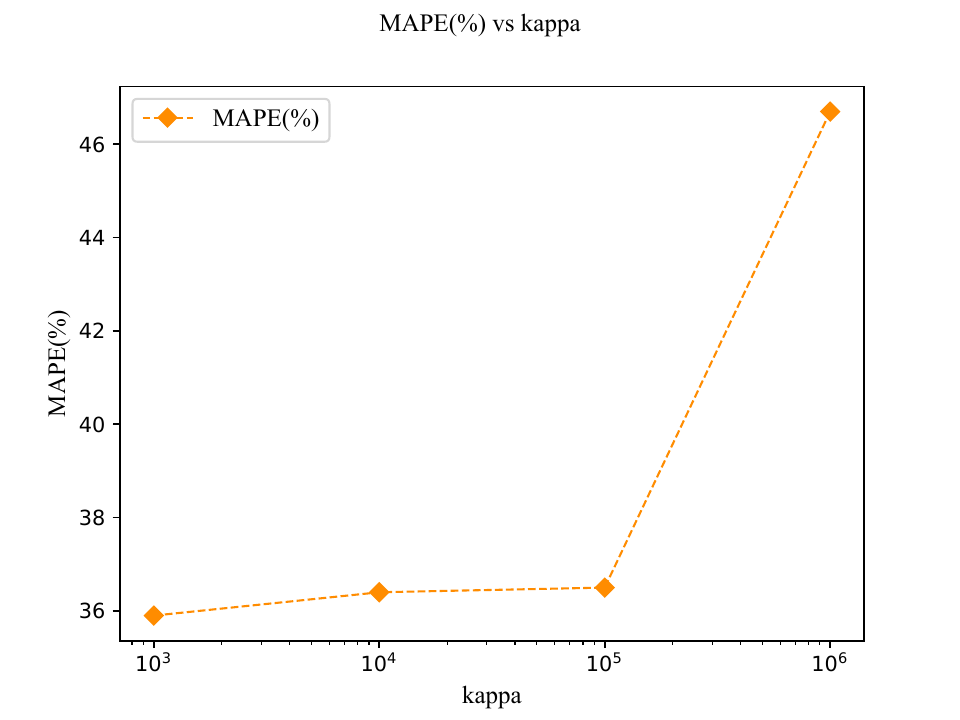}
    \label{subfig:mape}
\end{minipage}
\hspace{0pt} 
\begin{minipage}{0.3\textwidth}
    \centering
    \includegraphics[width=\textwidth]{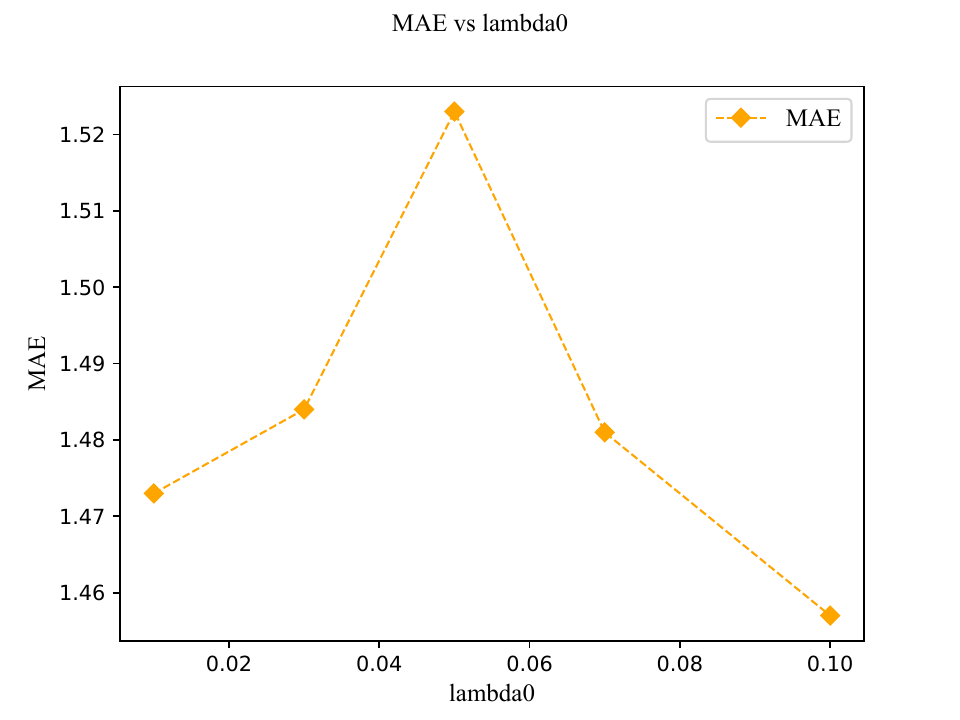}
    \label{subfig:mape}
\end{minipage}
\hspace{0pt} 
\begin{minipage}{0.3\textwidth}
    \centering
    \includegraphics[width=\textwidth]{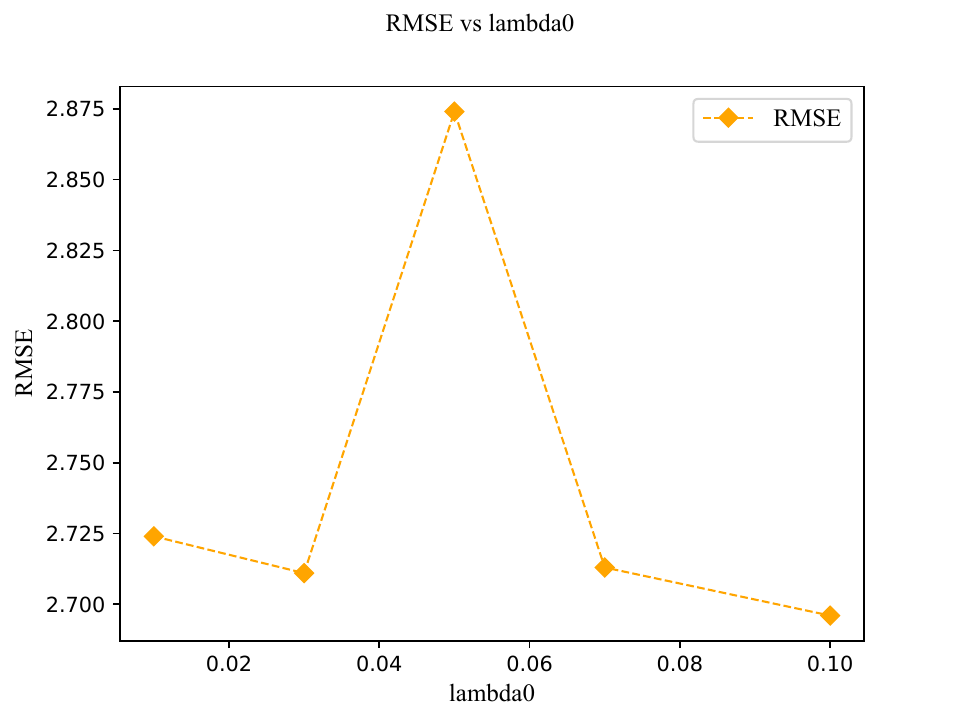}
    \label{subfig:mape}
\end{minipage}
\hspace{0pt} 
\begin{minipage}{0.3\textwidth}
    \centering
    \includegraphics[width=\textwidth]{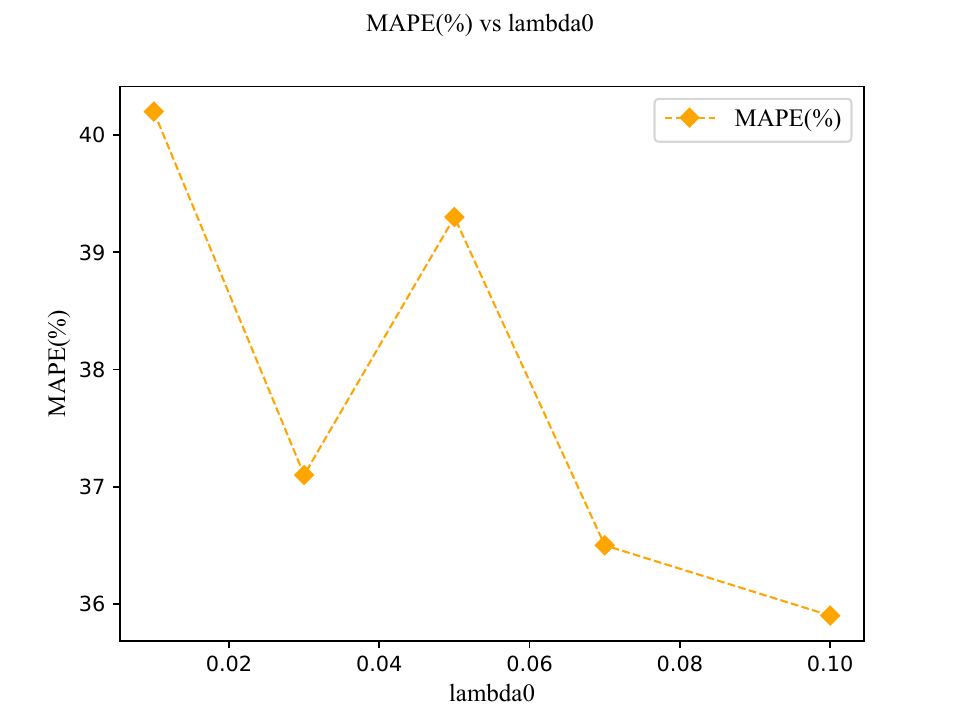}
    \label{subfig:mape}
\end{minipage}
\caption{Hyperparameter sensitivity on CHI}
\label{fig6}
\end{figure*}

\begin{figure*}[htbp]
\centering
\begin{minipage}{0.3\textwidth}
    \centering
    \includegraphics[width=\textwidth]{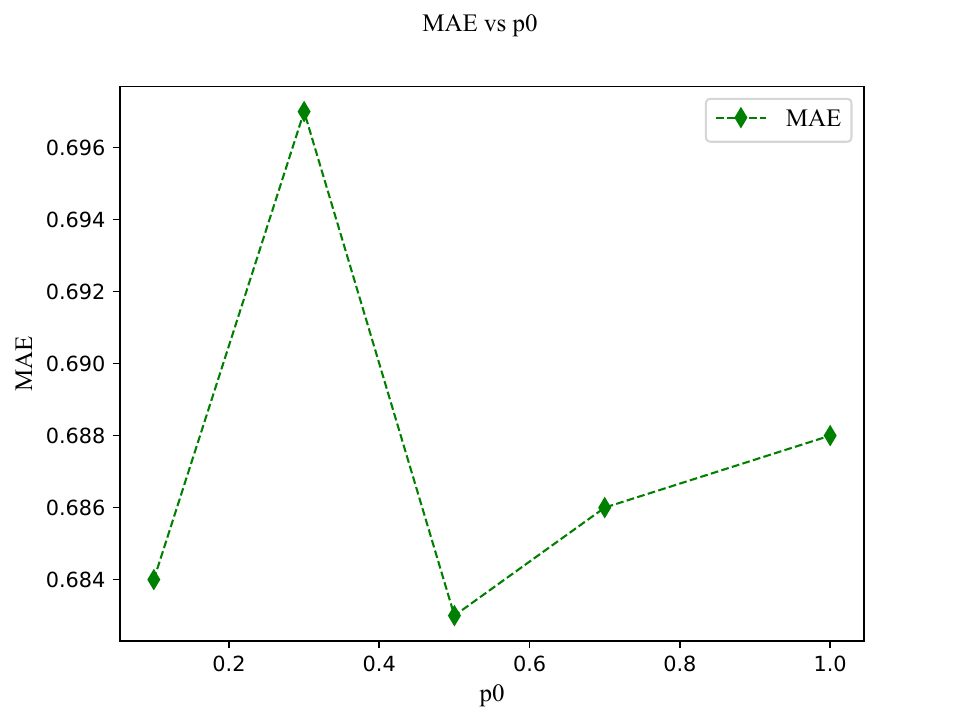}
    \label{subfig:mae}
\end{minipage}%
\hspace{0pt} 
\begin{minipage}{0.3\textwidth}
    \centering
    \includegraphics[width=\textwidth]{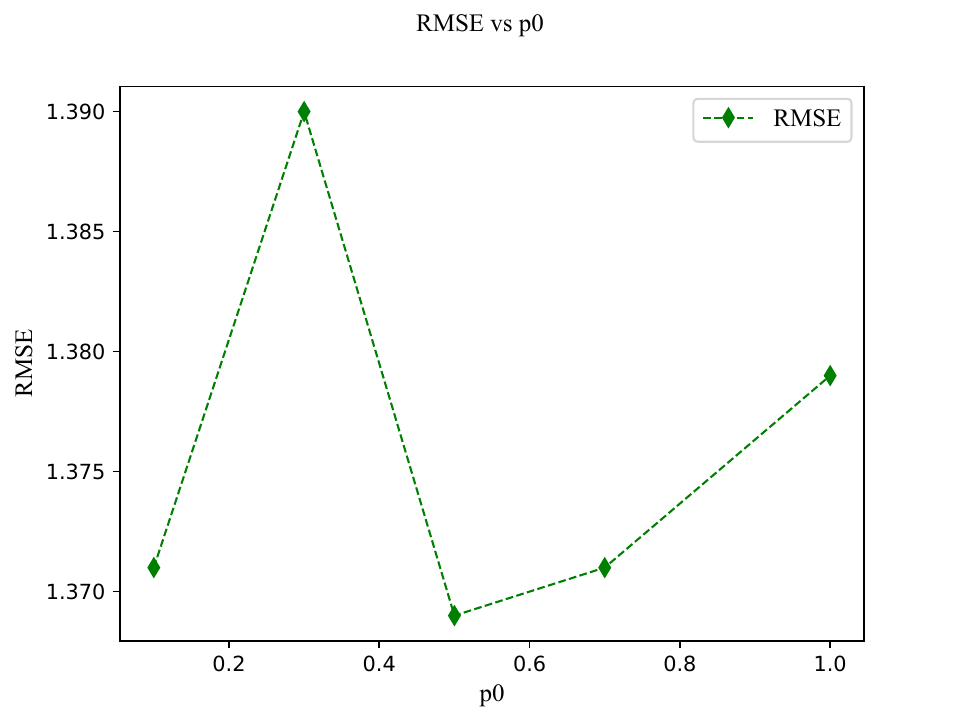}
    \label{subfig:rmse}
\end{minipage}%
\hspace{0pt} 
\begin{minipage}{0.3\textwidth}
    \centering
    \includegraphics[width=\textwidth]{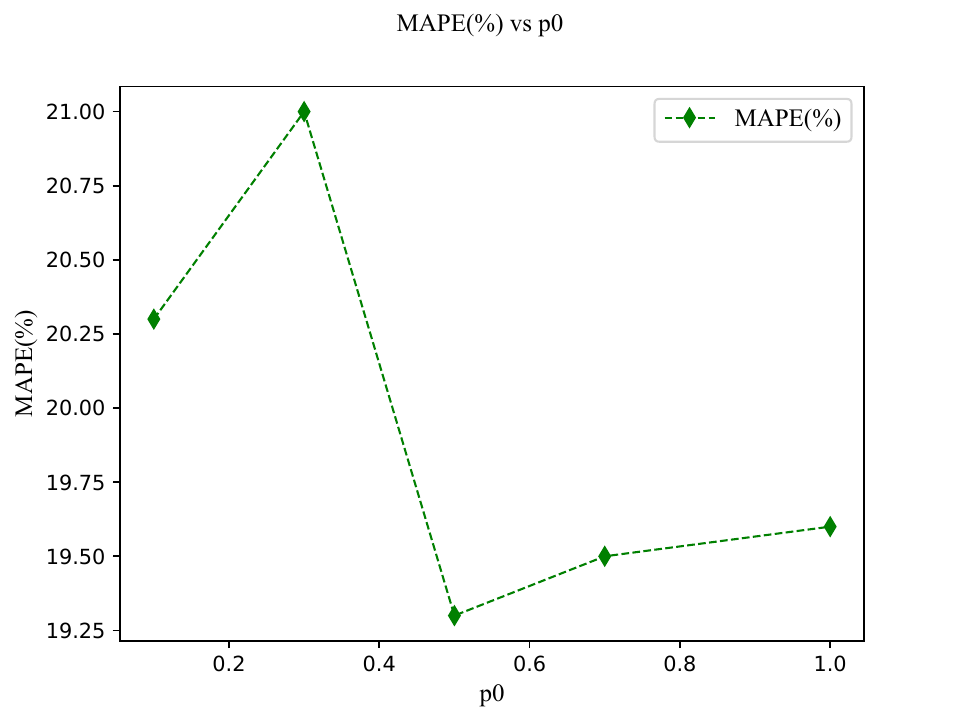}
    \label{subfig:mape}
\end{minipage}
\hspace{0pt} 
\begin{minipage}{0.3\textwidth}
    \centering
    \includegraphics[width=\textwidth]{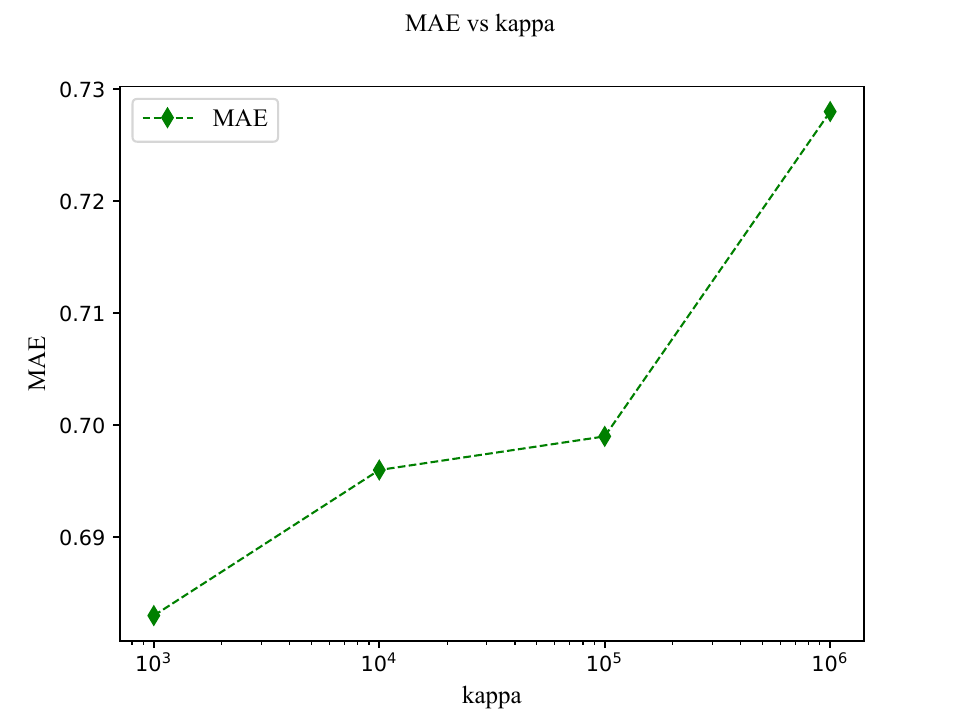}
    \label{subfig:mape}
\end{minipage}
\hspace{0pt} 
\begin{minipage}{0.3\textwidth}
    \centering
    \includegraphics[width=\textwidth]{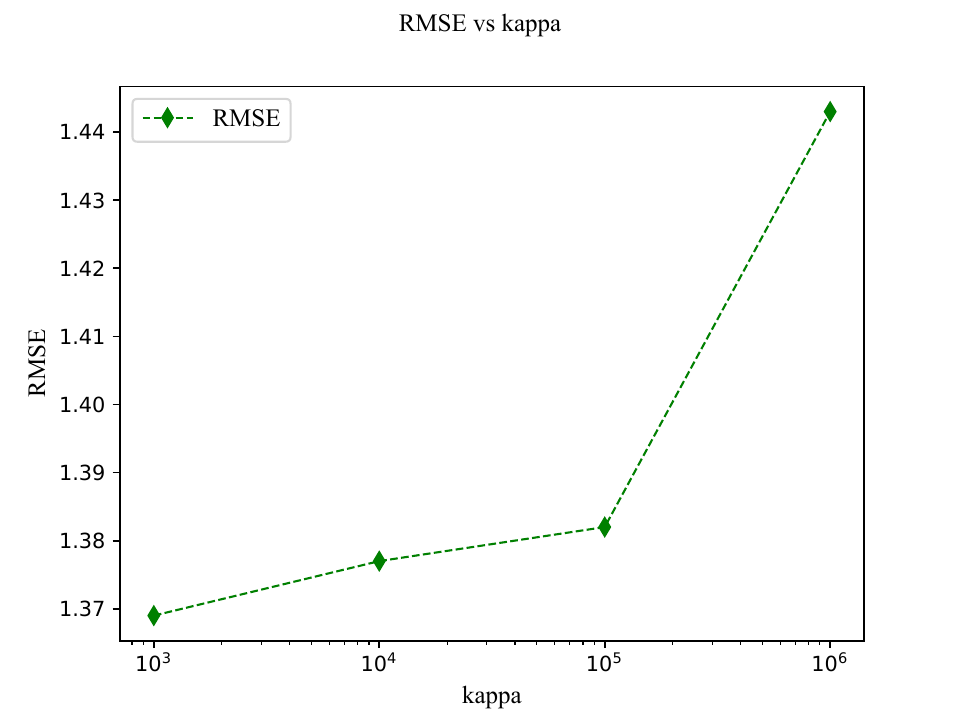}
    \label{subfig:mape}
\end{minipage}
\hspace{0pt} 
\begin{minipage}{0.3\textwidth}
    \centering
    \includegraphics[width=\textwidth]{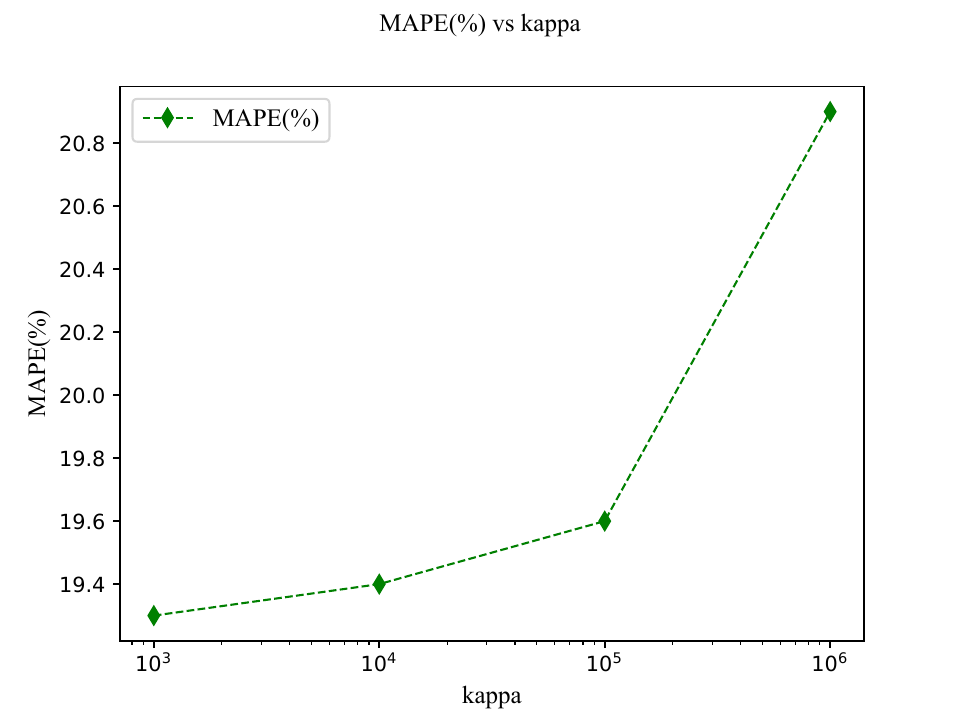}
    \label{subfig:mape}
\end{minipage}
\hspace{0pt} 
\begin{minipage}{0.3\textwidth}
    \centering
    \includegraphics[width=\textwidth]{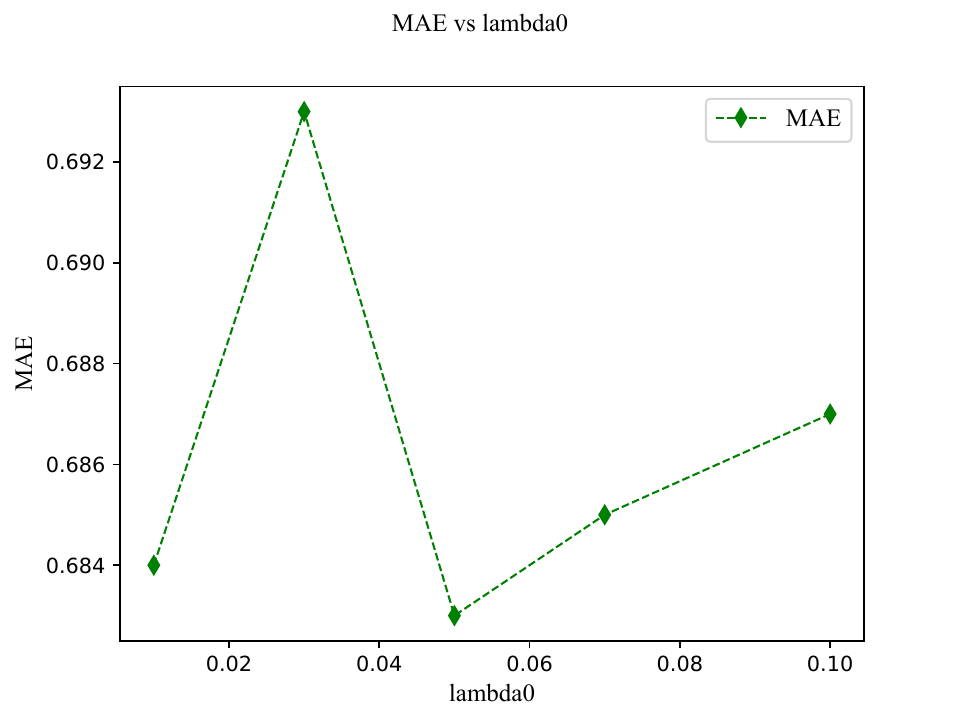}
    \label{subfig:mape}
\end{minipage}
\hspace{0pt} 
\begin{minipage}{0.3\textwidth}
    \centering
    \includegraphics[width=\textwidth]{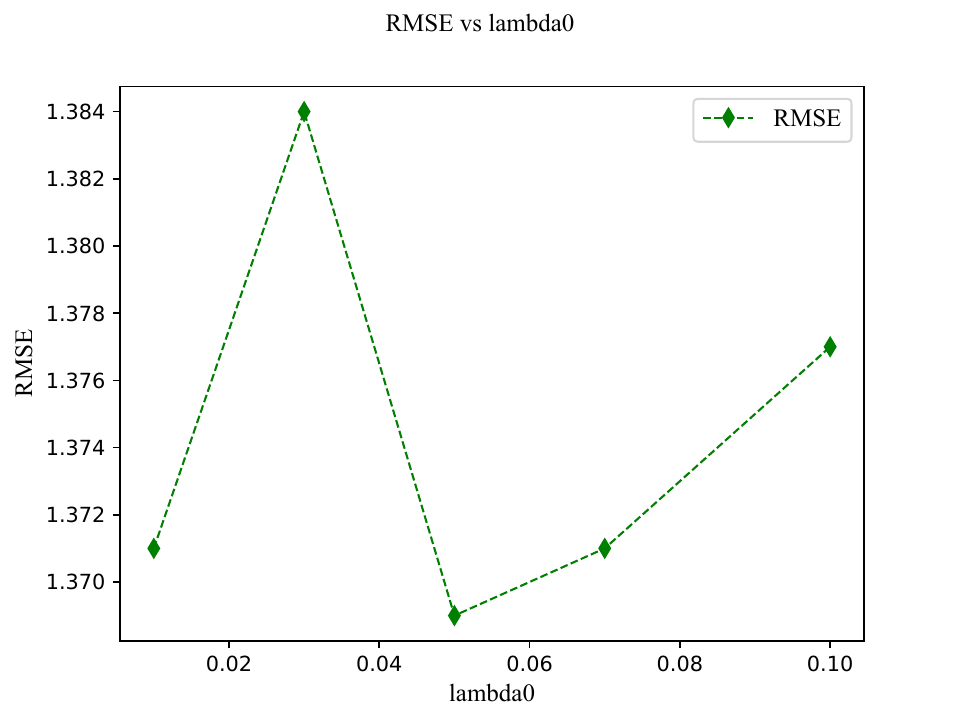}
    \label{subfig:mape}
\end{minipage}
\hspace{0pt} 
\begin{minipage}{0.3\textwidth}
    \centering
    \includegraphics[width=\textwidth]{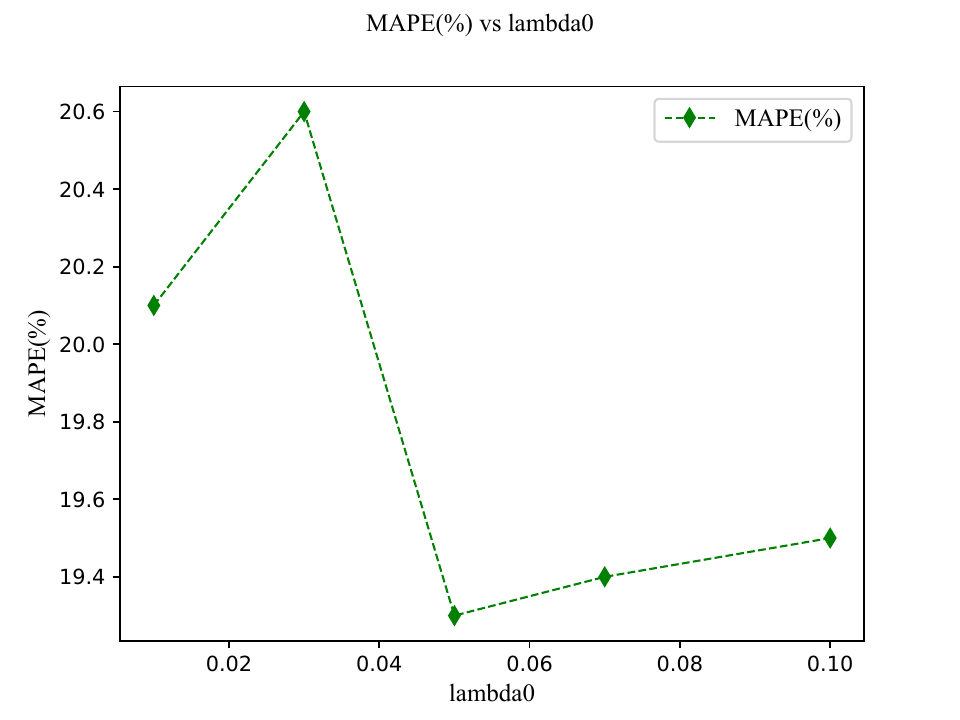}
    \label{subfig:mape}
\end{minipage}
\caption{Hyperparameter sensitivity on SIP}
\label{fig7}
\end{figure*}

\begin{figure*}[htbp]
\centering
\begin{minipage}{0.3\textwidth}
    \centering
    \includegraphics[width=\textwidth]{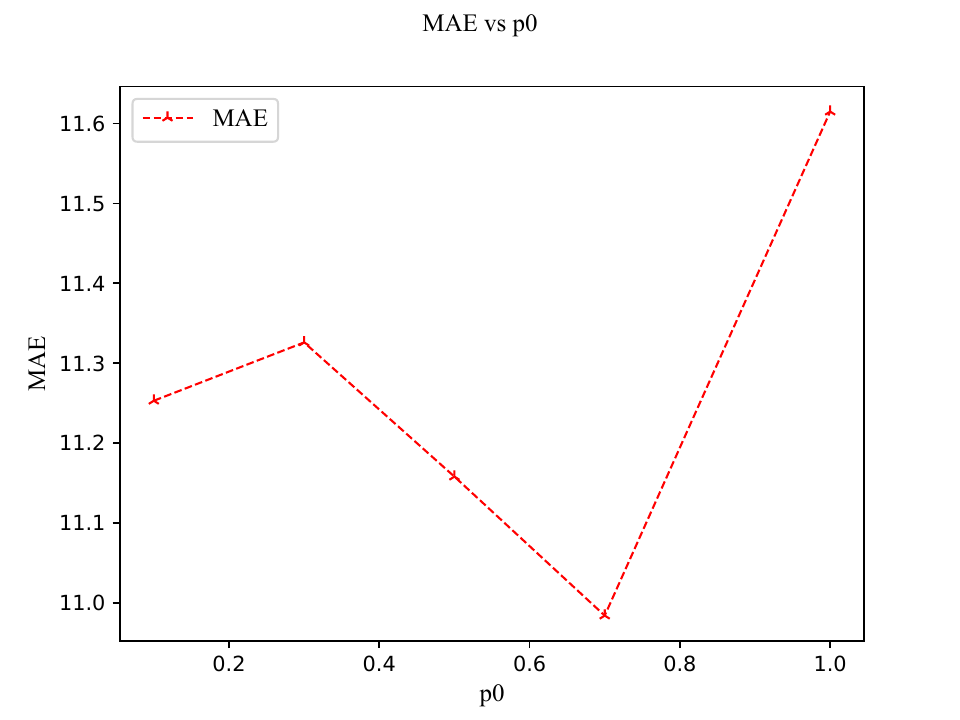}
    \label{subfig:mae}
\end{minipage}%
\hspace{0pt} 
\begin{minipage}{0.3\textwidth}
    \centering
    \includegraphics[width=\textwidth]{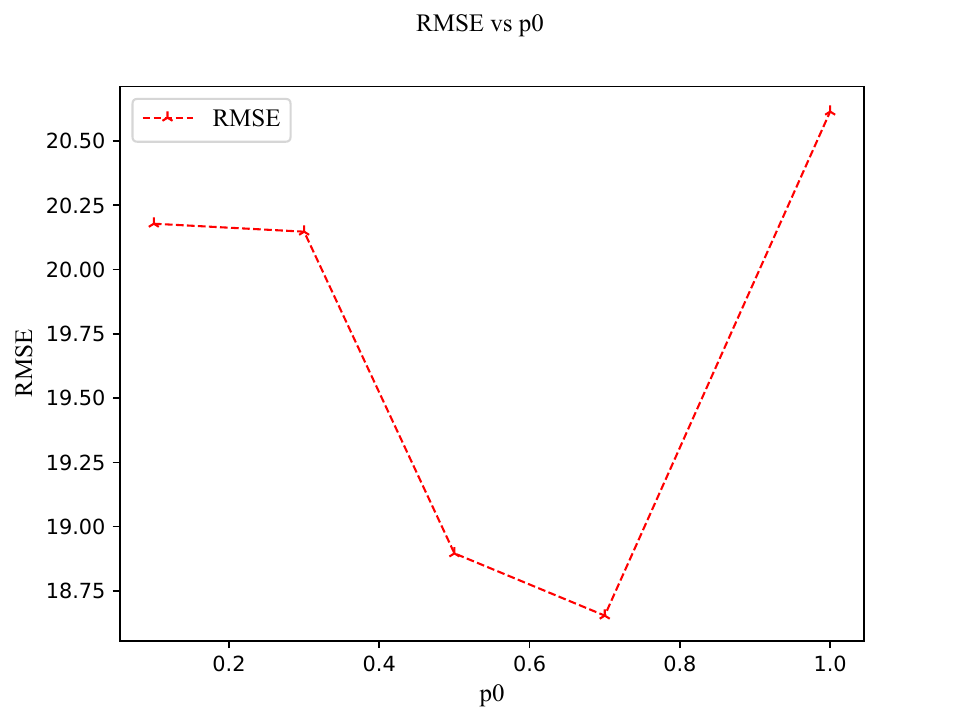}
    \label{subfig:rmse}
\end{minipage}%
\hspace{0pt} 
\begin{minipage}{0.3\textwidth}
    \centering
    \includegraphics[width=\textwidth]{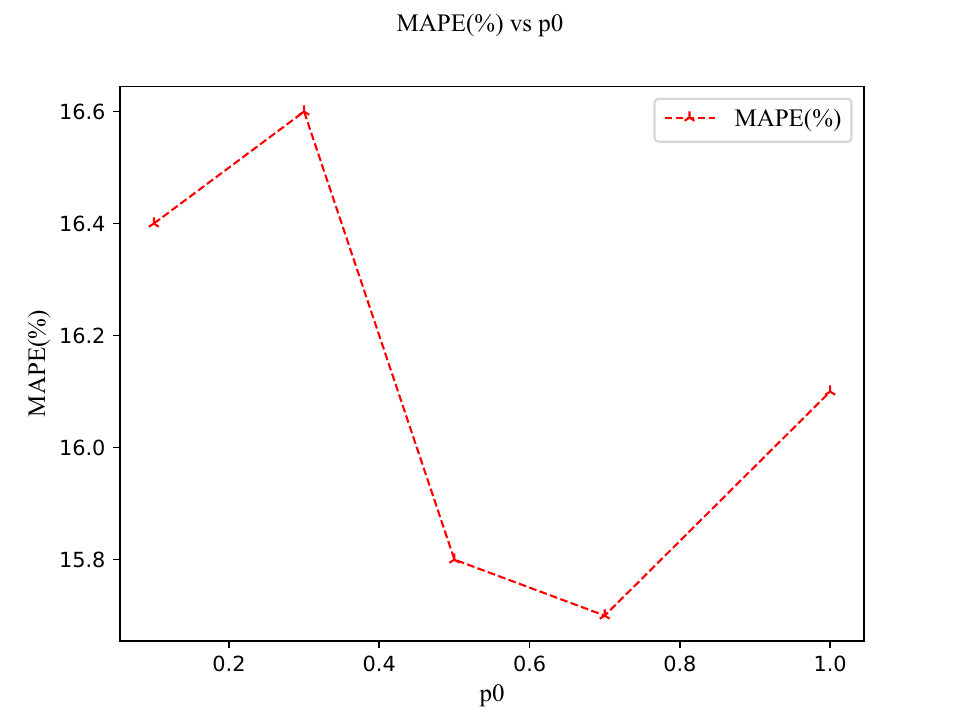}
    \label{subfig:mape}
\end{minipage}
\hspace{0pt} 
\begin{minipage}{0.3\textwidth}
    \centering
    \includegraphics[width=\textwidth]{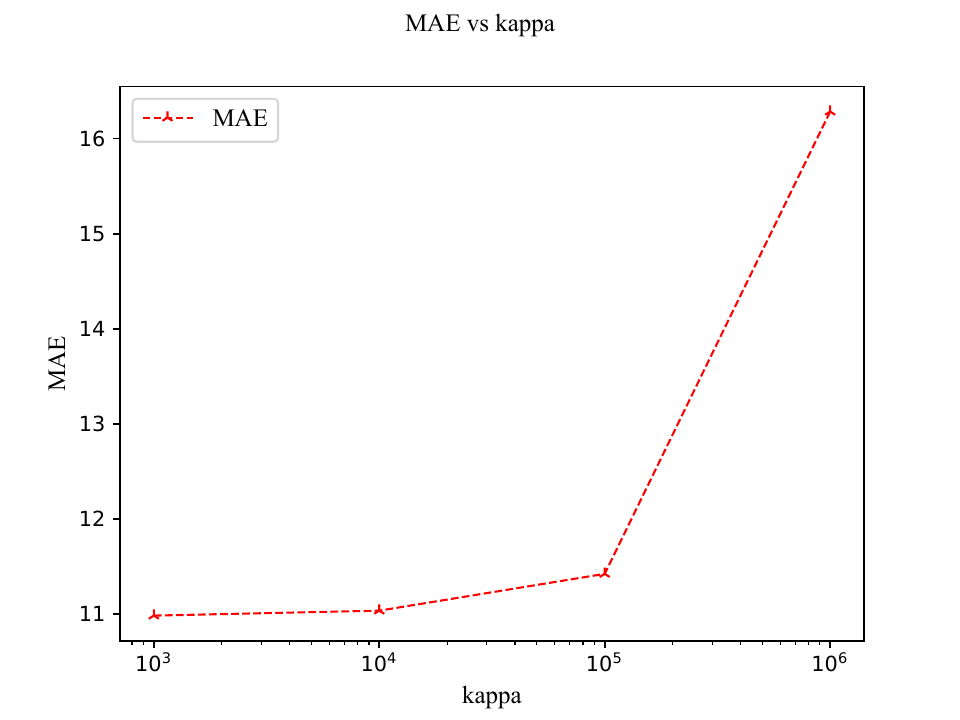}
    \label{subfig:mape}
\end{minipage}
\hspace{0pt} 
\begin{minipage}{0.3\textwidth}
    \centering
    \includegraphics[width=\textwidth]{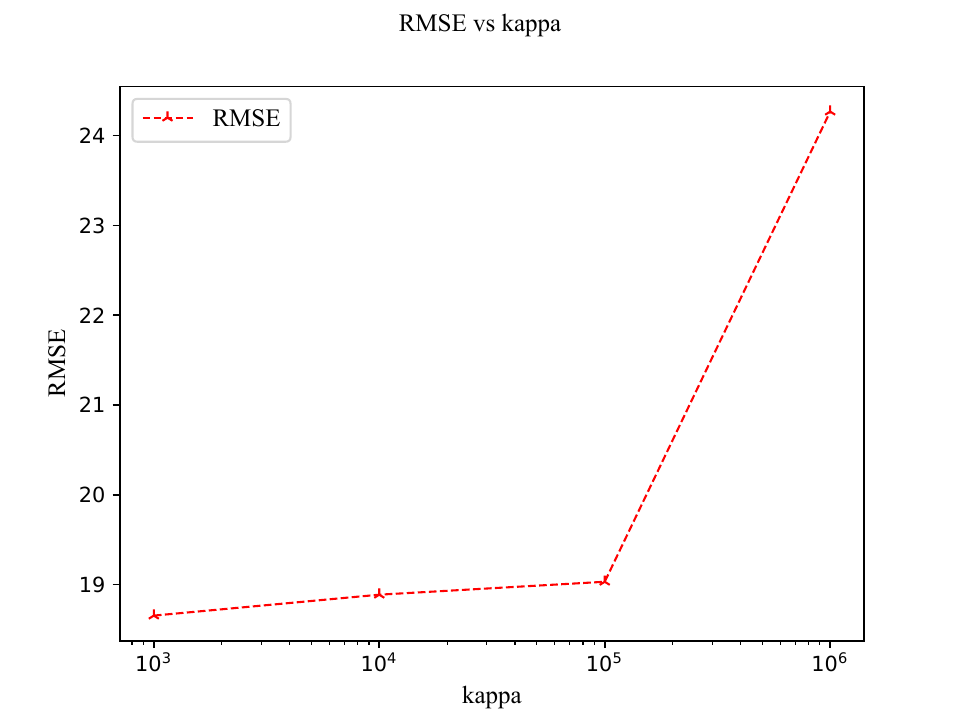}
    \label{subfig:mape}
\end{minipage}
\hspace{0pt} 
\begin{minipage}{0.3\textwidth}
    \centering
    \includegraphics[width=\textwidth]{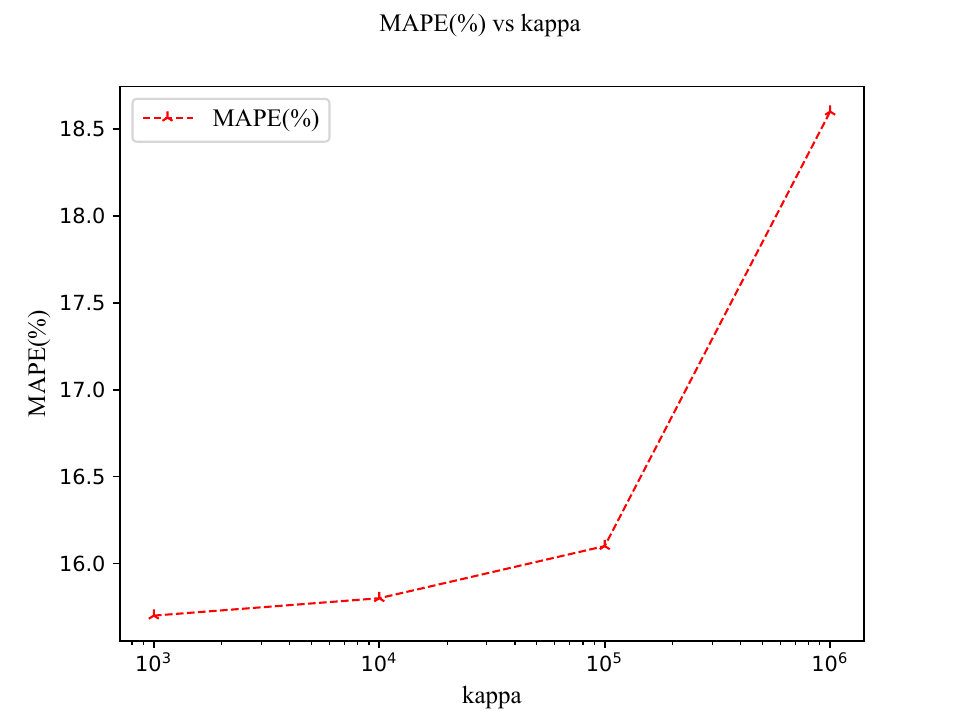}
    \label{subfig:mape}
\end{minipage}
\hspace{0pt} 
\begin{minipage}{0.3\textwidth}
    \centering
    \includegraphics[width=\textwidth]{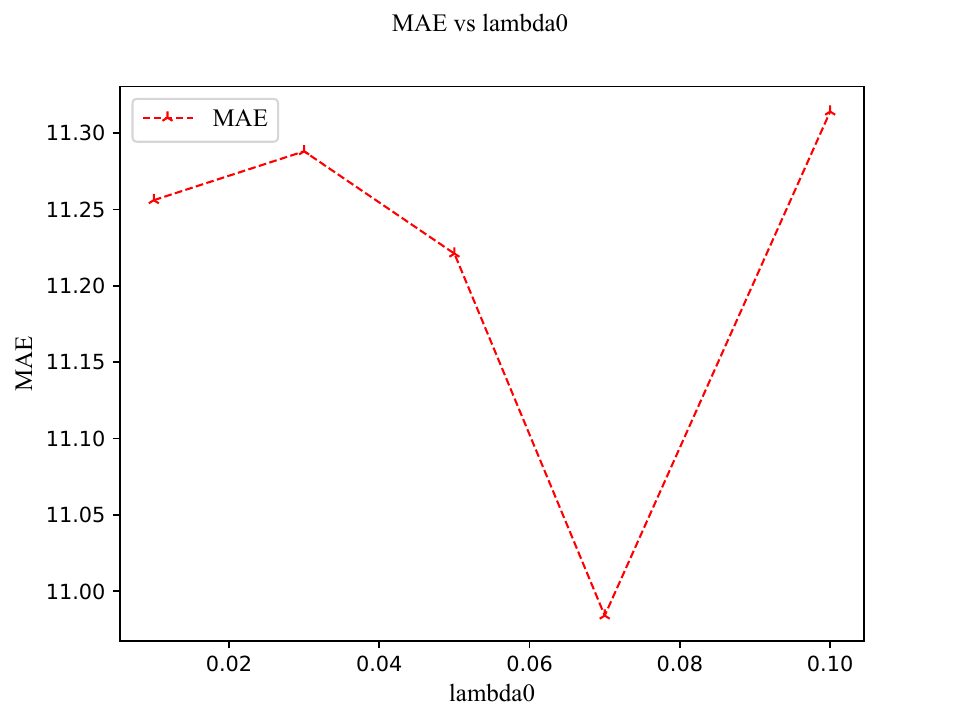}
    \label{subfig:mape}
\end{minipage}
\hspace{0pt} 
\begin{minipage}{0.3\textwidth}
    \centering
    \includegraphics[width=\textwidth]{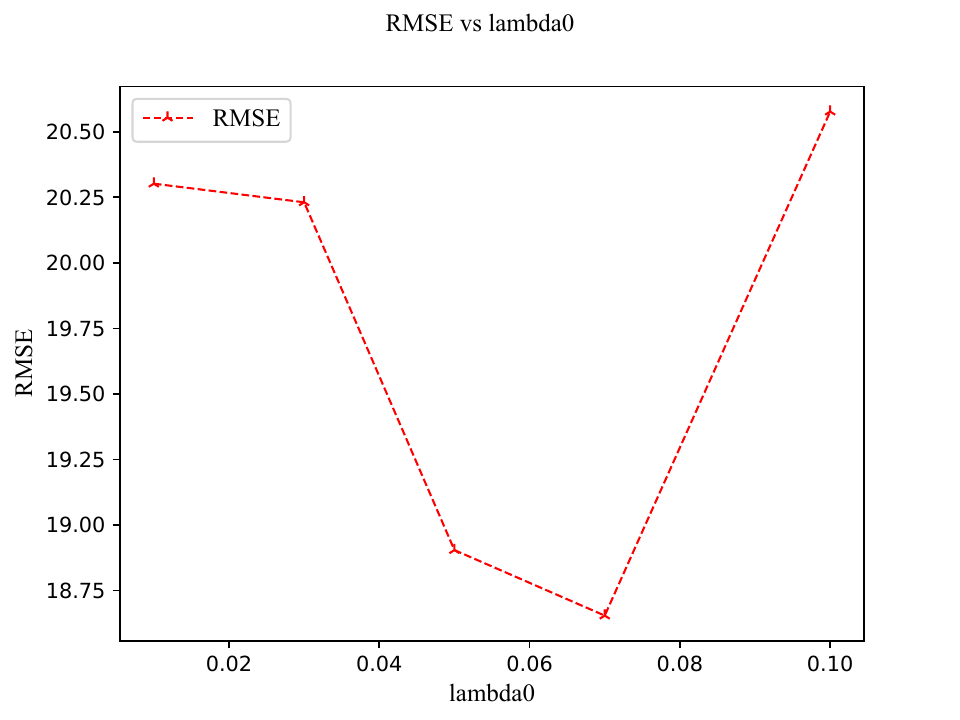}
    \label{subfig:mape}
\end{minipage}
\hspace{0pt} 
\begin{minipage}{0.3\textwidth}
    \centering
    \includegraphics[width=\textwidth]{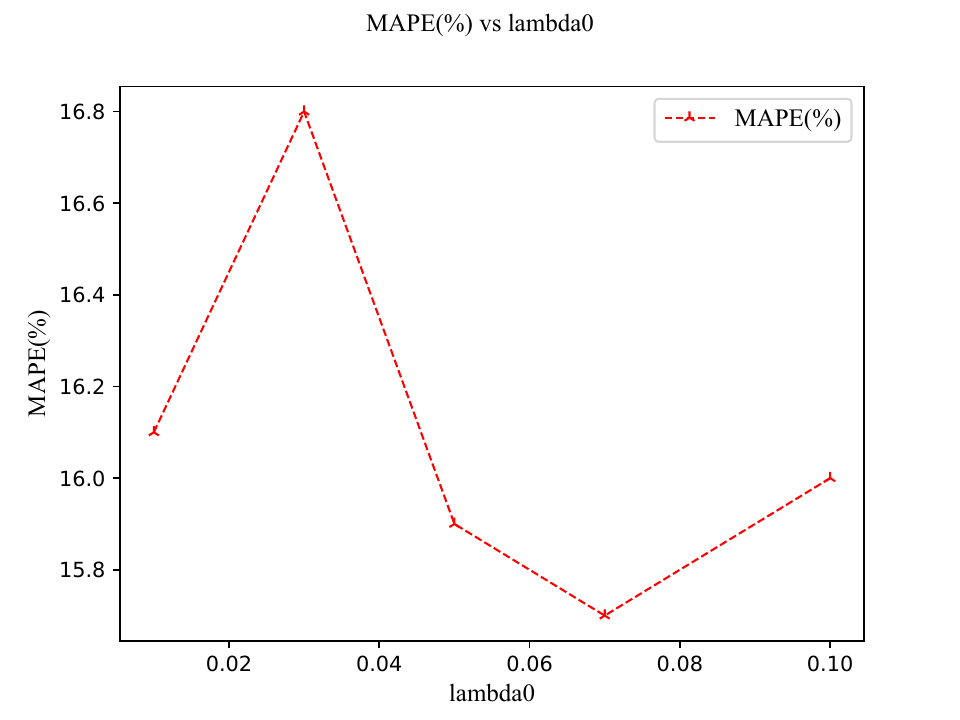}
    \label{subfig:mape}
\end{minipage}
\caption{Hyperparameter sensitivity on SD}
\label{fig8}
\end{figure*}

\end{document}